\newtheorem{theorem}{Theorem}
\newtheorem{othertheorem}{othertheorem}[section]
\newtheorem{lemma}[othertheorem]{Lemma}
\newtheorem{fact}[othertheorem]{Fact}
\newcommand{\norm}[1]{\lVert#1\rVert}
\newcommand{\mat}{\mathbf}
\newcommand{\unif}{\mathrm{unif}}
\newcommand{\vect}[1]{\mathbf{#1}}
\newcommand{\expect}{\mathbb{E}}
\newcommand{\indict}{\mathbb{I}}
\newcommand{\relu}[1]{\sigma\left(#1\right)}
\numberwithin{equation}{section}
\newcommand{\w}{\bm{w}}
\begin{document}
\twocolumn[

\aistatstitle{DebiNet: Debiasing Linear Models with Nonlinear Overparameterized Neural Networks}

\aistatsauthor{ Shiyun Xu$^\star$ \And Zhiqi Bu$^\star$}

\aistatsaddress{ Department of Applied Mathematics and Computational Science
\\
University of Pennsylvania } ]

\begin{abstract}
	Recent years have witnessed strong empirical performance of over-parameterized neural networks on various tasks and many advances in the theory, e.g. the universal approximation and provable convergence to global minimum. In this paper, we incorporate over-parameterized neural networks into semi-parametric models to bridge the gap between inference and prediction, especially in the high dimensional linear problem. By doing so, we can exploit a wide class of networks to approximate the nuisance functions and to estimate the parameters of interest consistently. Therefore, we may offer the best of two worlds: the universal approximation ability from neural networks and the interpretability from classic ordinary linear model, leading to both valid inference and accurate prediction. We show the theoretical foundations that make this possible and demonstrate with numerical experiments. Furthermore, we propose a framework, DebiNet, in which we plug-in arbitrary feature selection methods to our semi-parametric neural network. DebiNet can debias the regularized estimators (e.g. Lasso) and perform well, in terms of the post-selection inference and the generalization error.
\end{abstract}

\vspace{-0.25cm}
\section{Introduction}
\vspace{-0.15cm}
In the studies including signal inverse problem, genome-wide association studies, criminal justice and economic forecasts, linear models may be preferred over non-parametric models such as neural networks, due to their simplicity and interpretability. Especially, the ordinary least squares (OLS) estimator is known to be consistent and thus capable of offering both valid inference as well as prediction. In high dimension though, the unique OLS is not available and certain structural assumptions such as the sparsity has to be introduced, e.g. via the regularization. However, such regularization induces bias to the estimators and many efforts have been devoted to correcting or debiasing this bias for post-selection inference or selective inference \cite{taylor2015statistical,berk2013valid,lee2016exact,tibshirani2016exact}. Yet the methods are usually specific to a type of estimator and hard to compute efficiently. On the other hand, the black-box neural networks commonly have much stronger prediction performance over linear models, by adapting to the features automatically without requiring many assumptions. Nevertheless, it can be difficult to explain the prediction of a neural network and therefore to trust it without the inference guarantee.

The need to bridge the gap between the inference and the prediction motivates the partially linear model (PLM) \cite{engle1986semiparametric,liang1999estimation,chen1988convergence,hardle2012partially}, a semi-parametric model that combines the strengths of two lines of researches. We leverage the approximation power of the non-parametric component to efficiently and consistently estimate the significant regressors in the linear component, thus allowing the regularized linear models to infer. Additionally, the non-parametric component trades its interpretability off and contributes to the prediction power of the PLM.

Mathematically, PLM is a generalized problem which includes the linear problem as a sub-case: 
\begin{align}
\mathbf{y}=\mathbf{D}\bm\beta+f(\mathbf{Z})+\bm\epsilon.
\label{eq:PLM form}
\end{align}
Here $\mathbf{D}$ and $\mathbf{Z}$ are two data matrices, $\mathbf{y}$ is the label, $\bm\beta$ is the parameters of interest, $f$ is a (possibly) non-linear function and $\bm\epsilon$ is the noise. The linear component $\mathbf{D}\bm\beta$ is parametric and the coefficients $\bm\beta$ can be estimated consistently \cite{hardle2012partially,robinson1988root}, provided that the non-linear component $f$, which is non-parametric, can be accurately approximated. In other words, the effects of estimation and approximation in PLM are separated into different components.

The key element to the strong PLM performance is the approximation power of the non-parametric component. Traditionally, kernel estimators such as Nadaraya-Watson (NW) kernel \cite{robinson1988root} or Local Linear (LL) regression \cite{hamilton1997local} are applied for the approximation and have been proven to be consistent. Recently, Double/Debiased Machine Learning (DML) is proposed in \cite{chernozhukov2018double} that uses any qualified machine learning models to replace the kernels. In this work, we propose to apply a particular choice of models, namely the over-parameterized neural network, to extend the DML framework both theoretically and algorithmically. To be more specific, our contribution is three-fold:
\begin{enumerate}[leftmargin=0.8cm,labelsep=0.3cm]
    \item We propose a PLM that employs arbitrary over-parameterized neural networks to approximate nuisance functions, based on the partialling-out technique. We illustrate its advantages over the computational efficiency, the flexibility, the robustness and the performance.
    \item We utilize the over-parameterized neural network theories, such as the Neural Tangent Kernel (NTK) \cite{jacot2018neural}, to guarantee the goodness of approximation and the consistency of $\bm\beta$ estimators.
    \item We further develop a framework, DebiNet, that uses our PLM to debias arbitrary feature selection methods and demonstrates promising results on both the inference and the prediction.
\end{enumerate}
Moreover, while our PLM can be applied to real-world datasets, as demonstrated in Section \ref{sec:7}, we focus on synthetic data for Table \ref{table1:compare with PLMs} and Table \ref{table2:debiasing} as we need the access to true $\bm\beta$ to illustrate the statistical consistency, which is our main goal of debiasing. 

\begin{figure}[!htb]
\vspace{-0.5cm}
\centering
\includegraphics[width=12cm]{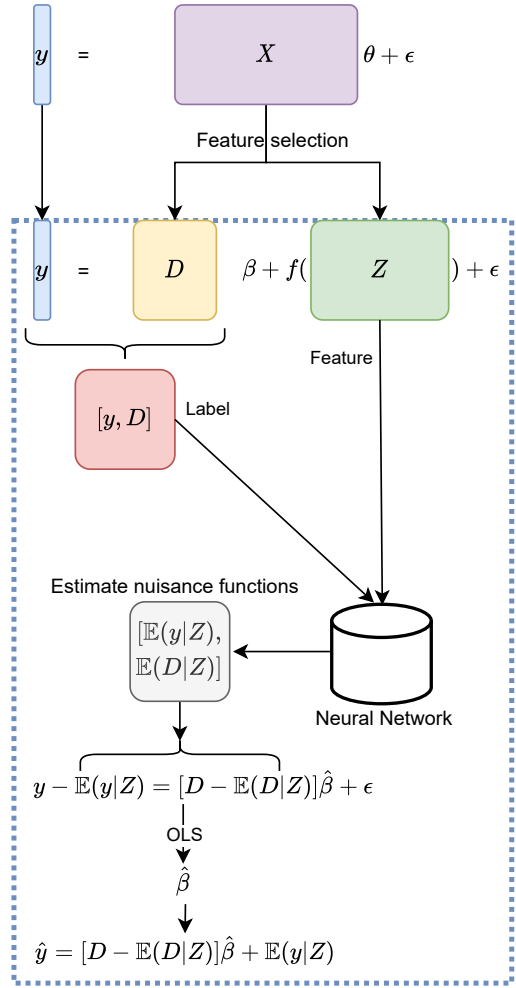}
\vspace{-2cm}
\caption{The architecture of DebiNet. The DebiNet applies arbitrary feature selection method and then PLM-NN in the blue dashed frame to estimate the parameters of interest as well as to predict.}
\end{figure}

\vspace{-0.3cm}
\subsection{Related Work}
\vspace{-0.1cm}
\paragraph{Feature selection and post-selection inference}
In the high dimensional data analysis, many feature selection methods such as the forward-backward selection \cite{efroymson1960multiple}, Lasso \cite{tibshirani1996regression}, adaptive Lasso \cite{zou2006adaptive}, elastic net \cite{zou2005regularization}, (sparse) group lasso \cite{simon2013sparse,friedman2010note} and SLOPE \cite{bogdan2015slope} have been proposed to select important variables. However, these methods may incur large bias and make valid inference difficult \cite{park2008bayesian,hans2009bayesian} if not impossible. A long list of researches have developed methods to debias these models, especially the Lasso \cite{lee2016exact,chernozhukov2015post,taylor2014post}, by separating the procedure of variable selection and coefficient estimation. Two well-known yet quite different approaches are the OLS post-Lasso \cite{belloni2013least} and the debiased Lasso (or the desparsified Lasso) \cite{zhang2014confidence,van2014asymptotically,li2017debiasing}. Interestingly, both methods are deeply related to PLM.

\vspace{-0.25cm}
\paragraph{Partially linear models}
PLMs are commonly used in multiple fields of study to partly debias the estimators. Kernel smoothing \cite{speckman1988kernel}, local polynomial regression \cite{hamilton1997local}, spline-based regression \cite{heckman1986spline} and related techniques have been applied to PLM and proven consistent under various conditions. To take one step further, Double/Debiased Machine Learning (DML) \cite{chernozhukov2018double} proposed to use sample-splitting and any qualifying machine learning model to learn PLM, but the conditions under which a model qualifies can be case-specific and hence hard to verify. We note that the Lasso, decision trees, random forests and an under-parameterized two-neuron neural network are investigated in \cite{chernozhukov2018double}, yet the over-parameterized neural networks are not covered. We emphasize that the choice of model is critical and our proposal is efficient in learning with theoretical support. 

\vspace{-0.25cm}
\paragraph{Over-parameterized neural networks}
To solve complicated machine learning tasks, state-of-the-art neural networks are heavily over-parameterized. Such neural networks may enjoy many nice properties. For example, the universal approximation property \cite{stinchcombe1989universal,gybenko1989approximation,white1990connectionist,hornik1991approximation,leshno1993multilayer} allows neural networks to learn arbitrary target functions. Recently, NTK has been explored to study the generalization behavior and the convergence of over-parameterized neural networks \cite{arora2019fine,allen2019learning,du2018gradient,lee2019wide,du2018gradient2}. In this work, we follow this approach to show that multivariate-output neural networks converge to the global minimum exponentially fast.


\vspace{-0.25cm}
\section{Setup and Notation}
\vspace{-0.15cm}
Suppose the data matrix $\mathbf X\in\mathbb{R}^{n\times p}$ with i.i.d. observations, true parameters $\bm\theta\in\mathbb{R}^p$, label $\mathbf y\in\mathbb{R}^n$ and $\bm\epsilon$ is the random noise. We consider the linear model
\begin{align}
    \mathbf y=\mathbf X\bm\theta+\bm\epsilon
    \label{eq:original}
\end{align}
and apply a feature selection method which generates an estimator $\hat{\bm\theta}$ together with an active set of $[p]$ defined as $S:=\{j: \hat \theta_j\neq 0\}$. We rewrite $\mathbf X=[\mathbf D,\mathbf Z]$ by grouping the selected features into the submatrix $\mathbf D:=\mathbf{X}_S\in\mathbb{R}^{n\times p_L}$ and the unselected ones into $\mathbf Z:=\mathbf{X}_{S}^C\in\mathbb{R}^{n\times p_N}$. We can write $\bm\theta^\top=[\bm\beta^\top,\bm\gamma^\top]$ according to whether a feature is selected or not. Then, the original linear model \eqref{eq:original} is equivalent to
\vspace{-0.2cm}
\begin{align}
 \mathbf y=\mathbf D\bm\beta+\mathbf Z\bm\gamma+\bm\epsilon
 \label{eq:true model}
\end{align}
in which we assume $\mathbb{E}(\bm\epsilon|\mathbf D,\mathbf Z)=0$ and Var$(\bm\epsilon)=\sigma_\epsilon^2\mathbf{I}$, same as in \cite{robinson1988root} and \cite[Example 1.1]{chernozhukov2018double}. Here we can consider \textbf{arbitrary feature selection methods} including the arguably most well-known $\ell_1$ regularized linear model, Lasso, as our main example: 
\vspace{-0.15cm}
\begin{align*}
\bm{\hat\theta}_\text{Lasso}=\text{argmin}_{\bm\theta} \frac{1}{2}\|\mathbf X \bm\theta-\mathbf y\|^2+\lambda\|\bm\theta\|_1
\end{align*}
We generalize the model \eqref{eq:true model} to a partially linear model as in \eqref{eq:PLM form}, in which $\mathbf D\bm\beta$ is the parametric component and $f(\mathbf Z)$ is the non-parametric component with $f:\mathbb{R}^{p_N}\to\mathbb{R}$. 

On one hand, this formulation enables us to incorporate non-parametric tools in estimating the parameters of interest $\bm\beta$ consistently. Notice that $f(\mathbf Z)=\mathbf Z\bm\gamma$ is only estimated as a whole without estimating $\bm\gamma$. In other words, we trade off the consistency of estimating $\bm\gamma$ to gain the consistency of estimating $\bm\beta$. When the feature selection method is effective, most of the significant features are selected, together with some noisy features. In this case, the partially linear model with carefully chosen nuisance functions and tuned hyperparameters, is expected to estimate the coefficients of the selected features consistently.

The trick of obtaining consistent estimator is to apply the conditional expectation on $\mathbf Z$ to orthogonalize or `\textbf{partial out}' the non-linear component:
\vspace{-0.1cm}
\begin{align*}
\mathbf y=\mathbf D\bm\beta+f(\mathbf Z)+\bm\epsilon
\Longrightarrow
\mathbb{E}(\mathbf y|\mathbf Z)=\mathbb{E}(\mathbf D|\mathbf Z)\bm\beta+f(\mathbf Z)
\end{align*}
Taking the difference between the two equations leaves an ordinary linear model without intercept,
\vspace{-0.1cm}
\begin{align}
\mathbf y-\mathbb{E}(\mathbf y|\mathbf Z)&=(\mathbf D-\mathbb{E}(\mathbf D|\mathbf Z))\bm\beta+\bm\epsilon
\label{eq:truebeta}
\end{align}
Therefore, as long as we can accurately estimate the conditional expectation (at least asymptotically), $\bm{\hat\beta}$ is consistent by the theory of ordinary least squares. On the other hand, PLM can accurately predict on future data, if the conditional expectation is well-approximated:
\vspace{-0.1cm}
\begin{align}
\mathbf{\hat y}=\mathbf D\bm{\hat\beta}+\hat f(\mathbf Z).
\end{align}
Here $\hat f(\mathbf Z)$ is an approximation of the unknown $f(\mathbf Z)$.

\vspace{-0.25cm}
\section{Partially Linear Model with Neural Network}
\vspace{-0.15cm}
We propose an efficient PLM in Algorithm \ref{alg:PLMNN}, using the neural network to universally approximate the mapping $\mathcal{M}(\mathbf{Z}):=\mathbb{E}([\mathbf{y},\mathbf{D}]|\mathbf{Z})$ in a single fitting, without approximating $f$ explicitly. Comparing with other PLMs, our method enjoys several advantages over the computational efficiency (fewer model fittings and less memory burden), the flexibility with multivariate output and network architectures, as well as the provable global convergence at linear rate.
\setlength{\intextsep}{1\baselineskip}
\begin{algorithm}[!htp]
\centering 
\caption{Partially Linear Model with Neural Networks (PLM-NN)}
\begin{algorithmic}
	\STATE{\textbf{Input}: Data matrix $[\mat{D},\mat{Z}]$, label $\vect{y}$} \\
	\STATE{\textbf{Estimation of $\bm\beta$:}}
	\\
	\STATE{\quad\quad 1. fit $[\vect{y},\mat{D}]\sim \mat{Z}$ via over-parameterized neural network to derive $\mathbb{E}([\vect y,\mat D]|\mat Z)$;}
	\\
	\STATE{\quad\quad 2. fit $\vect y-\mathbb{E}(\vect y|\mat Z)\sim \mat D-\mathbb{E}(\mat D|\mat Z)$ via OLS to derive $\bm{\hat\beta}$;}
	\\
	\STATE{\textbf{Prediction of $\vect y$ and Estimation of $f$:}}
	\\
	\STATE{\quad\quad 3. define $\vect {\hat y}:=\mathbb{E}(\vect y|\mat Z)+(\mat D-\mathbb{E}(\mat D|\mat Z))\bm{\hat\beta}$ and $\hat f(\mat Z):=\mathbb{E}(\vect y|\mat Z)-\mathbb{E}(\mat D|\mat Z)\bm{\hat\beta}$}
\end{algorithmic}
\label{alg:PLMNN}
\end{algorithm}

Historical researches and recent advances have been made towards the fast and accurate estimation of PLMs. Traditional PLMs (see Algorithm \ref{alg:plm kernel} in Appendix) employ kernel methods (e.g. the NW kernel \cite{robinson1988root} and the LL estimator \cite{hamilton1997local}) to estimate the nuisance functions $m_y(\mathbf{Z}):=\mathbb{E}(\mathbf{y}|\mathbf{Z})$ and $m_D(\mathbf{Z}):=\mathbb{E}(\mathbf{D}|\mathbf{Z})$ separately, before fitting the OLS to obtain $\bm{\hat\beta}$. Kernels with proper bandwidths has been rigorously shown to consistently estimate the nuisance functions \cite{robinson1988root,hardle2012partially,li1996root}, under mild conditions, including the case that $\mathbf{D}$ is multivariate.

However, additional models may need to be fitted to estimate $\bm\beta$ and to predict $\mathbf{\hat y}$. As a toy example, consider $\mathbf{y}=\mathbf{D}\bm\beta+\mathbf{Z}\bm\gamma$ and $\mathbf{D}$ is independent of $\mathbf{Z}$. Due to the dependence conditions, different bandwidths $h_D, h_y$ are used in the weight functions: $h_y$ is finite for approximating $m_y$ but $h_D$ should be set to infinity for approximating $m_D$. As a consequence, another possibly different bandwidth $h_f$ is needed for approximating $f$ to predict $\mathbf{\hat{y}}$, resulting in a total of three kernel regressions and one OLS regression. In addition, kernel regressions require memorizing the entire dataset and incur severe storage issues on big data.

Recently, DML \cite{chernozhukov2018double} extends PLMs, by replacing the kernel regression with a rich class of qualified machine learning methods and by employing sample-splitting to remove the bias from overfitting nuisance functions. 
Unfortunately, checking the qualification of models may be difficult and the sample-splitting adds to the computational cost. Furthermore, $\mathbf D$ being multivariate renders many popular methods invalid or inefficient to learn $m_D$, including the vanilla Lasso and decision trees. DML suggests to fit each feature of $\mathbf{D}$ to $\mathbf{Z}$ separately so as to only learn univariate output functions. We note that such operation can be costly in high dimension: with $K$-fold sample-splitting, one needs $K(p_L+1)$ model fittings to learn $(m_y,m_D)$.
\begin{table}[!htb]
\centering
\begin{tabular}{|cccc|}
\hline 
PLMs & Estimation & Train& Test
\\
& MSE&MSE&MSE
\\
\hline
\toprule
PLM-NN & 1.59$\times 10^{-5}$ & 10.48 & 10.36 \\
PLM-NW & 0.11$\times 10^{-5}$ & 16.33 & 16.08\\
DML Lasso & 0.60$\times 10^{-5}$ & 15.79 & 15.79 \\
DML DT & 65.1$\times 10^{-5}$ & 336.00 & 342.68\\
DML RF & 0.58$\times 10^{-5}$ & 44.00 & 44.87\\
\bottomrule
\end{tabular}
\captionof{table}{Comparison of PLMs in 50 independent runs. Here PLM-NW denotes the PLM using NW kernel, DT denotes decision trees at depth of 2 and RF denotes random forests over 100 trees. See Appendix \ref{table1:data} for experiment details.}
\label{table1:compare with PLMs}
\end{table}

Instead, we propose to use over-parameterized neural networks as strong candidates compared to other methods. For instance, neural networks may not suffer as much as kernel regressions from the curse of dimensionality \cite{bach2017breaking}, a significant decrease of performance when the dimension increases. We further extend the DML framework by significantly reducing the computational cost as follows. First, we choose the nuisance function $\mathcal{M}: \mathbb{R}^{p_N}\to\mathbb{R}^{p_L+1}$ instead of $(m_y,m_D)$ to handle the multivariate output directly. We will show that our nuisance function $\mathcal{M}$ is learnable to neural networks. Second, we do not need sample-splitting to deal with overfitting, thanks to the universal approximating property of over-parameterized neural networks. In practice, we adopt the early stopping to enhance the generalization performance of our estimator. As a result, the total number of model fittings to approximate nuisance functions reduces from $K(p_L+1)$ to 1. 

We empirically illustrate that our method is comparable to the traditional PLM with kernels and to the original DML with $K$-fold cross-fitting in Table \ref{table1:compare with PLMs}. Here the estimation MSE is $\|\bm{\hat\beta}-\bm\beta\|_2^2/p_L$ (only available in synthetic data) and the train/test MSE is $\|\hat {\mathbf y}-\mathbf y\|_2^2/n$. Notice that throughout this paper, we train the two-layer, fully-connected, ReLU activated neural networks with an appropriately wide hidden layer, for its simplicity in proof. However, the DebiNet can work with any neural networks such as deep fully-connected and convolutional neural networks. We optimize over MSE loss with the gradient descent at a sufficiently small learning rate.

\vspace{-0.3cm}
\section{Gradient Descent Optimizes Over-parameterized Multivariate Networks}
\vspace{-0.15cm}

With the universal approximation capability for any continuous function, the over-parameterized neural networks with optimal parameters can successfully learn $\mathbb{E}(\mathbf{y}|\mathbf{Z})$ and $\mathbb{E}(\mathbf{D}|\mathbf{Z})$. Many works have studied the generalization behavior of the over-parameterized neural networks \cite{novak2018sensitivity,arora2019fine,allen2019learning,cao2019generalization,neyshabur2019role,park2019effect}. We empirically demonstrate that, in our setting of Table \ref{table1:compare with PLMs}, the generalization indeed benefits from the over-parameterization.

The next question is how to efficiently find the optimal parameters and we address this by extending the NTK approach in \cite{du2018gradient}.

To demonstrate the trainability of wide neural networks, we consider a two-layer fully-connected neural networks with rectified linear unit (ReLU) activation. Denoting $\mathbf{W} \in \mathbb{R}^{p_N\times m}, \mathbf{A} \in \mathbb{R}^{m\times(p_L+1)}$ as the weights in first and second layers respectively, we can write the neural network as
\vspace{-0.15cm}
\begin{align}
F(\mat{W},\vect{A}, \vect{z}) = \frac{1}{\sqrt{m}}\sum_{r=1}^{m}\vect{A}_r \relu{\vect{w}_r^\top \vect{z}}
\end{align}
where $F: \mathbb{R}^{p_N}\to\mathbb{R}^{p_L+1}$, $\vect{z} \in \mathbb{R}^{p_N}$ is the input, $\vect{w}_r$ and $\vect{A}_r$ are the weights corresponding to the $r$-th neuron in the hidden layer and $\relu{\cdot}$ is the ReLU activation function. 

Given the dataset $\left\{(\vect{Z}_i,\vect{M}_i)\right\}_{i=1}^n$ with the multivariate response $\vect{M}:=[\vect{y}, \vect{D}]$. We aim to minimize 
\vspace{-0.18cm}
\begin{align}
 L(\mat{W},\vect{A}) = \sum_{i=1}^{n}\frac{1}{2}\left\|F(\mat{W},\vect{A},\vect{Z}_i)-\vect{M}_i\right\|^2. \label{eqn:loss_regression}
\end{align}
Adopting the same strategy as \cite{du2018gradient}, we fix the second layer and apply the gradient descent to optimize the first layer, via the gradient flow defined as
\begin{align*}
&\frac{d\vect{w}_r(t)}{dt} = - \frac{\partial L(\mat{W}(t),\vect{A})}{\partial \vect{w}_r(t)}\\
= &\sum_{h=1}^{(1+p_L)}\sum_{j=1}^n(M_{jh} - F_h(\mat{W},\mat{A},\vect{Z}_j))\frac{\partial F_h(\mat{W},\mat{A},\vect{Z}_j)}{\partial \vect{w}_r}.
\end{align*}
Now we quote an important fact that justifies our main theorem.
\begin{fact}[Assumption 3.1 and Theorem 3.1 in \cite{du2018gradient}]
If for any $i \neq j$, $\vect{Z}_i \not \parallel \vect{Z}_j$, then the least eigenvalue $\lambda_0 := \lambda_{\min}\left(\mat{H}^{\infty}\right) > 0$, where matrix $\mat{H}^\infty \in \mathbb{R}^{n \times n}$ with $(\mat{H})_{ij} ^\infty= \expect_{\vect{w} \sim \mathcal N(\vect{0},\mat{I})}\left[\vect{Z}_i^\top \vect{Z}_j\indict\left\{\vect{w}^\top \vect{Z}_i \ge 0, \vect{w}^\top \vect{Z}_j \ge 0\right\}\right]$.
\label{asmp:main}
\end{fact}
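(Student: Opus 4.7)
The plan is to recognize $\mat{H}^\infty$ as the Gram matrix of a family of vector-valued functions in a Hilbert space, so that positive definiteness reduces to linear independence of that family. Concretely, I would define, for each data point $i \in [n]$, the function $f_i : \mathbb{R}^{p_N} \to \mathbb{R}^{p_N}$ by $f_i(\w) = \vect{Z}_i \, \indict\{\w^\top \vect{Z}_i \ge 0\}$, viewed as an element of $L^2(\mathbb{R}^{p_N}, \gamma)$ where $\gamma$ is the standard Gaussian measure. A direct computation gives
\begin{equation*}
\langle f_i, f_j \rangle_{L^2(\gamma)} \;=\; \expect_{\w \sim \nml(\vect{0}, \mat{I})}\bigl[\vect{Z}_i^\top \vect{Z}_j \,\indict\{\w^\top \vect{Z}_i \ge 0,\, \w^\top \vect{Z}_j \ge 0\}\bigr] \;=\; H^\infty_{ij},
\end{equation*}
so $\mat{H}^\infty$ is positive semidefinite, and $\lambda_0 > 0$ iff $\{f_1, \ldots, f_n\}$ are linearly independent in $L^2(\gamma)$.

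The main work is then proving linear independence under the assumption that no two $\vect{Z}_i, \vect{Z}_j$ are parallel. I would argue by contradiction: suppose $\sum_{i=1}^n c_i f_i = 0$ in $L^2(\gamma)$, i.e.\ for $\gamma$-almost every $\w$,
\begin{equation*}
\sum_{i=1}^n c_i \vect{Z}_i \,\indict\{\w^\top \vect{Z}_i \ge 0\} \;=\; \vect{0}.
\end{equation*}
The hyperplanes $\mathcal{H}_i := \{\w : \w^\top \vect{Z}_i = 0\}$ partition $\mathbb{R}^{p_N}$ into polyhedral cells on which every indicator $\indict\{\w^\top \vect{Z}_k \ge 0\}$ is constant. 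The non-parallel hypothesis implies the hyperplanes $\mathcal{H}_1, \ldots, \mathcal{H}_n$ are pairwise distinct. Therefore, for each fixed $i$, there exists a relatively open subset $U_i \subset \mathcal{H}_i$ that misses every other $\mathcal{H}_j$ ($j\neq i$). Pick a point $\w^{(i)} \in U_i$; in a small enough ball around $\w^{(i)}$, the only indicator that changes as we cross $\mathcal{H}_i$ is the $i$-th one. Subtracting the "$+\epsilon$" version of the identity from the "$-\epsilon$" version (both hold on a full-measure set by continuity of $\gamma$) yields $c_i \vect{Z}_i = \vect{0}$, hence $c_i = 0$. Since $i$ was arbitrary, all $c_i$ vanish, which is the desired contradiction.

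The main obstacle is making the cell/jump argument rigorous despite the "almost everywhere" qualifier: the identity $\sum_i c_i f_i = 0$ only holds outside a $\gamma$-null set, so I cannot literally evaluate at the measure-zero hyperplane. The standard fix is to observe that on each open polyhedral cell the function $\w \mapsto \sum_i c_i \vect{Z}_i \indict\{\w^\top \vect{Z}_i \ge 0\}$ is constant and that cells have positive $\gamma$-measure, so the identity holds pointwise on each cell. Then the "difference across $\mathcal{H}_i$" is the difference of two constants between two adjacent cells, each of positive measure, making the jump argument valid. A minor auxiliary point is that the condition $\vect{Z}_i \not\parallel \vect{Z}_j$ must be interpreted to include the case of opposite directions; this is needed so that $\mathcal{H}_i \ne \mathcal{H}_j$, which in turn guarantees the existence of $U_i$ above. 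With these two technicalities handled, the proof concludes by combining the Gram-matrix reformulation with the jump-discontinuity argument.
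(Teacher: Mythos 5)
Your argument is correct and is essentially the canonical proof: the paper itself only quotes this as a Fact from \cite{du2018gradient} without proof, and your reduction to linear independence of the functionals $\w \mapsto \vect{Z}_i\,\indict\{\w^\top\vect{Z}_i \ge 0\}$ in $L^2$ of the Gaussian measure, followed by the jump argument across $\mathcal{H}_i$ at a point avoiding all other hyperplanes, is exactly the proof of Theorem 3.1 in that reference. Your two technical caveats (upgrading the a.e.\ identity to pointwise constancy on positive-measure cells, and reading $\not\parallel$ to exclude antiparallel pairs so that the hyperplanes are pairwise distinct) are precisely the points that need care, and you handle them correctly.
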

Our main theorem shows that, if the least eigenvalue of $\vect{H}_{sh}(t)$, which will be defined in \eqref{eqn:H_t}, is always lower bounded, then the loss converges to 0 at a linear rate.

\begin{theorem}\label{thm:main_gf}
Suppose the condition of Fact~\ref{asmp:main} holds and for all $i \in [n]$, $\norm{\vect{Z}_i}_2 = 1$  and $\norm{\vect{M}_{i}} \le C$ for some constant $C$.
Then if we set the number of hidden neurons $m = \Omega \left( \frac{(1+p_L)^5n^6}{\delta^3 \lambda_0^4}\right)$ and we i.i.d. initialize $\vect{w}_r \sim N(\vect{0},\mat{I})$, ${A}_{rs} \sim \unif\left\{-1,1\right\}$ for $r \in [m], s \in [1+p_L]$, then with probability at least $1-\delta$ over the initialization, we have 
\begin{align*}
\norm{\vect{F}_s(t)-\vect{M}_s}_2^2 \le \exp(-\lambda_0 t)\norm{\vect{F}_s(0)-\vect{M}_s}_2^2.
\end{align*}
\end{theorem}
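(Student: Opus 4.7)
The approach is to extend the NTK-style convergence proof of \cite{du2018gradient} from scalar to multivariate output. The novel ingredient is controlling the coupling between the $1+p_L$ coordinates induced by shared first-layer weights. First I would derive the dynamics of the per-coordinate residual. Using the chain rule, the gradient-flow formula in the text, and $\partial F_h(\vect{Z}_j)/\partial \vect{w}_r=m^{-1/2}A_{rh}\indict\{\vect{w}_r^\top\vect{Z}_j\ge 0\}\vect{Z}_j$, one obtains
\begin{align*}
\frac{d(\vect{F}_s(t)-\vect{M}_s)}{dt}=-\sum_{h=1}^{1+p_L}\vect{H}_{sh}(t)\bigl(\vect{F}_h(t)-\vect{M}_h\bigr),
\end{align*}
with
\begin{align*}
(\vect{H}_{sh}(t))_{ij}=\tfrac{1}{m}\sum_{r=1}^m A_{rs}A_{rh}\,\vect{Z}_i^\top\vect{Z}_j\,\indict\{\vect{w}_r(t)^\top\vect{Z}_i\ge 0,\ \vect{w}_r(t)^\top\vect{Z}_j\ge 0\}.
\end{align*}
The diagonal blocks $\vect{H}_{ss}$ play the usual NTK role, while the off-diagonals $\vect{H}_{sh}$ for $h\ne s$ are the multivariate-specific terms that must be shown negligible.

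\textbf{Initialization.} At $t=0$ I would apply Hoeffding's inequality entry-wise over the $m$ i.i.d.\ hidden neurons. Because $A_{rs}^2=1$, $\expect[\vect{H}_{ss}(0)]=\vect{H}^\infty\succeq\lambda_0\vect{I}$ by Fact~\ref{asmp:main}; because $A_{rs}A_{rh}$ has mean zero for $s\ne h$, $\expect[\vect{H}_{sh}(0)]=\vect{0}$. A union bound over the $(1+p_L)^2$ block indices and the $n^2$ entries, combined with a Weyl-type spectral perturbation, then yields simultaneously $\lambda_{\min}(\vect{H}_{ss}(0))\ge(1-\eta)\lambda_0$ and $\|\vect{H}_{sh}(0)\|_2\le\eta\lambda_0/(1+p_L)$ for every $s,h$ with probability $1-\delta/3$, provided $m$ is large relative to $n^2(1+p_L)^2/(\eta\lambda_0)^2$. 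These union bounds are the source of a $(1+p_L)^5$-type factor in the width.

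\textbf{Perturbation.} Next I would bound how far $\vect{H}_{sh}(t)$ drifts from $\vect{H}_{sh}(0)$. Under the bootstrap hypothesis that the claimed linear convergence holds for every coordinate on $[0,t]$, integrating the gradient flow together with $\|\vect{Z}_i\|_2=1$ and $\|\vect{M}_i\|\le C$ gives $\|\vect{w}_r(t)-\vect{w}_r(0)\|_2\le R$ with $R=\bigO\bigl(\sqrt{n(1+p_L)}/(\sqrt{m}\,\lambda_0)\bigr)$. Only neurons with $|\vect{w}_r(0)^\top\vect{Z}_i|\le R$ can flip their activation on sample $i$; the number of such neurons is $\mathrm{Binomial}(m,\bigO(R))$ and concentrates. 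A union bound over $i\in[n]$ and over block indices $(s,h)$ then shows $\|\vect{H}_{sh}(t)-\vect{H}_{sh}(0)\|_2\le \eta\lambda_0/(1+p_L)$ uniformly in $s,h$ and $t$ with probability $1-\delta/3$, under the stated width.

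\textbf{Bootstrap closure.} Combining the three pieces, for a fixed $s$,
\begin{align*}
\tfrac{d}{dt}\|\vect{F}_s-\vect{M}_s\|_2^2 \le -2\lambda_{\min}(\vect{H}_{ss}(t))\|\vect{F}_s-\vect{M}_s\|_2^2+2\!\sum_{h\ne s}\!\|\vect{H}_{sh}(t)\|_2\|\vect{F}_s-\vect{M}_s\|_2\|\vect{F}_h-\vect{M}_h\|_2,
\end{align*}
and under the bootstrap hypothesis each $\|\vect{F}_h-\vect{M}_h\|_2$ decays at rate $\lambda_0$, so that Cauchy--Schwarz and the spectral estimates above absorb the cross terms into the leading dissipation with room to spare, yielding the per-coordinate bound $\|\vect{F}_s(t)-\vect{M}_s\|_2^2\le\exp(-\lambda_0 t)\|\vect{F}_s(0)-\vect{M}_s\|_2^2$. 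A standard continuity/contradiction argument (picking the first time at which the hypothesis would fail and deriving a contradiction from the analysis above) closes the induction. The \emph{main obstacle} is the off-diagonal blocks $\vect{H}_{sh}$ for $s\ne h$: they have no scalar-output analogue in \cite{du2018gradient}, and controlling them simultaneously at initialization and along the trajectory, while preserving the $\lambda_0$ rate rather than degrading it by a factor $1/(1+p_L)$, is precisely what forces both the double union bound and the $(1+p_L)^5$ dependence in the width requirement.
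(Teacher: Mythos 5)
Your overall skeleton --- the residual dynamics through the blocks $\vect{H}_{sh}$, Hoeffding at initialization with the off-diagonal blocks vanishing in expectation because $\expect[A_{rs}A_{rh}]=0$ for $s\neq h$, an activation-flip perturbation bound driven by a radius $R$, and a bootstrap/continuity closure --- is the same as the paper's, which in turn follows \cite{du2018gradient}. The genuine divergence is in how the coupling between output coordinates is handled at the end, and that is where your argument has a gap. The paper concatenates the $1+p_L$ residual vectors into a single vector in $\mathbb{R}^{n(1+p_L)}$ and works with the one block matrix $\mat{H}_{whole}$ whose $(s,h)$ block is $\mat{H}_{sh}$; since $\mat{H}_{whole}^\infty=\mathrm{diag}(\mat{H}^\infty,\dots,\mat{H}^\infty)$, its least eigenvalue is exactly $\lambda_0$ (no loss from the number of outputs), and the standard one-vector Lyapunov computation gives $\frac{d}{dt}\|\vect{M}_{conc}-\vect{u}_{conc}\|_2^2\le-\lambda_0\|\vect{M}_{conc}-\vect{u}_{conc}\|_2^2$ once $\lambda_{\min}(\mat{H}_{whole}(t))\ge\lambda_0/2$. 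No per-coordinate cross-term bookkeeping is needed.

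Your route instead keeps the coordinates separate and tries to absorb the cross term $2\sum_{h\ne s}\|\vect{H}_{sh}(t)\|_2\|\vect{F}_s-\vect{M}_s\|_2\|\vect{F}_h-\vect{M}_h\|_2$ into the dissipation $-2\lambda_{\min}(\vect{H}_{ss}(t))\|\vect{F}_s-\vect{M}_s\|_2^2$. This absorption does not go through as stated: to fold the cross term into the dissipation for coordinate $s$ you need the ratios $\|\vect{F}_h-\vect{M}_h\|_2/\|\vect{F}_s-\vect{M}_s\|_2$ to be bounded, but the bootstrap hypothesis only supplies \emph{upper} bounds on each residual and never a lower bound on $\|\vect{F}_s-\vect{M}_s\|_2$. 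If the initial residuals are unbalanced --- say $\|\vect{F}_s(0)-\vect{M}_s\|_2$ is near zero while some $\|\vect{F}_h(0)-\vect{M}_h\|_2$ is of order $\sqrt{n}$ --- the cross term dominates the dissipation for coordinate $s$ at small $t$, and the literal per-coordinate inequality you are bootstrapping can fail (a $2\times 2$ linear system with a nonzero off-diagonal entry and $e_s(0)=0$ already exhibits this). The robust finite-width conclusion is the concatenated bound $\|\vect{M}_{conc}-\vect{u}_{conc}(t)\|_2^2\le e^{-\lambda_0 t}\|\vect{M}_{conc}-\vect{u}_{conc}(0)\|_2^2$; the paper itself passes from this to the per-coordinate display with a bare ``hence,'' which is exact only when the off-diagonal blocks vanish. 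To make your proof close, either switch to the concatenated system and the least eigenvalue of $\mat{H}_{whole}$ as the paper does, or add an explicit argument controlling the residual ratios along the trajectory; the coordinate-by-coordinate absorption alone is not enough.
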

\begin{proof}[Proof of Theorem \ref{thm:main_gf} (sketch)]
The full proof can be found in Appendix \ref{AppendixB}. Here we sketch the proof at high level. The conditions of our theorem is to guarantee that $\vect{W}(t)$ and consequently $\vect{H}_{sh}(t)$, although being time-dependent, stay close to their initializations. Such phenomenon is commonly observed and known as the `lazy training' for over-parameterized neural networks. Then a careful analysis on the initialization $\vect{H}_{sh}(0)$ shows that it is close to $\mat{H}^\infty$ and that the NTK is positive definiteness, which leads to the exponentially fast convergence. Interestingly, we note that the NTK is close to a block diagonal matrix $\text{diag}(\mat{H}^\infty,\cdots,\mat{H}^\infty)$, i.e. each dimension of the output evolves under the same dynamics.

More formally, we derive the dynamics of the output by the chain rule and the gradient flow,
\begin{align}
\begin{split}
	\frac{d}{dt} F_{is}(t)	&=\sum_{r=1}^{m}\langle\frac{\partial F_s(\mat{W}(t),\vect{A},\vect{Z}_i)}{\partial \vect{w}_r(t)},\frac{d \vect{w}_r(t)}{dt} \rangle\\
	&= \sum_{h=1}^{(1+p_L)}\sum_{j=1}^{n}(M_{jh}-u_{jh}) (\mat{H}_{sh})_{ij}(t) 
\end{split}
\label{eqn:individual_dynamics}
\end{align}
in which $\mat{H}_{sh}(t)$ is an $n \times n$ matrix:
\begin{align}
(\mat{H}_{sh})_{ij}(t) 
= \sum_{r=1}^{m}\langle\frac{\partial F_s(\mat{W},\vect{A},\vect{Z}_i)}{\partial \vect{w}_r}, \frac{\partial F_h(\mat{W},\mat{A},\vect{Z}_j)}{\partial \vect{w}_r}\rangle \nonumber
=\\ \frac{1}{m} \vect{Z}_i^\top \vect{Z}_j\sum_{r=1}^m A_{rs}A_{rh} \indict\left\{\vect{Z}_i^\top \vect{w}_r(t) \ge 0, \vect{Z}_j^\top \vect{w}_r(t) \ge 0\right\}. 
\label{eqn:H_t}
\end{align}
Vectorizing \eqref{eqn:individual_dynamics} gives
\begin{align*}
\frac{d}{dt} \vect{F}_s(t) = \sum_{h=1}^{(1+p_L)} \mat{H}_{sh}(t)(\vect{M}_s-\vect{F}_s(t))
\end{align*}
which leads to
\begin{align*}
&\frac{d}{dt} (\vect{M}_s- \vect{F}_s(t))= -\sum_{h=1}^{(1+p_L)} \mat{H}_{sh}(t)(\vect{M}_s-\vect{F}_s(t))
\end{align*}
This matrix ordinary differential equation has a solution which decays exponentially fast (see Figure \ref{fig:NTK loss}), provided that $\vect{H}_{sh}(t)$ is positive definite with the least eigenvalue bounded away from 0.
\end{proof}

\begin{figure}[!htb]
\vspace{-0.66cm}
\includegraphics[width=0.45\textwidth]{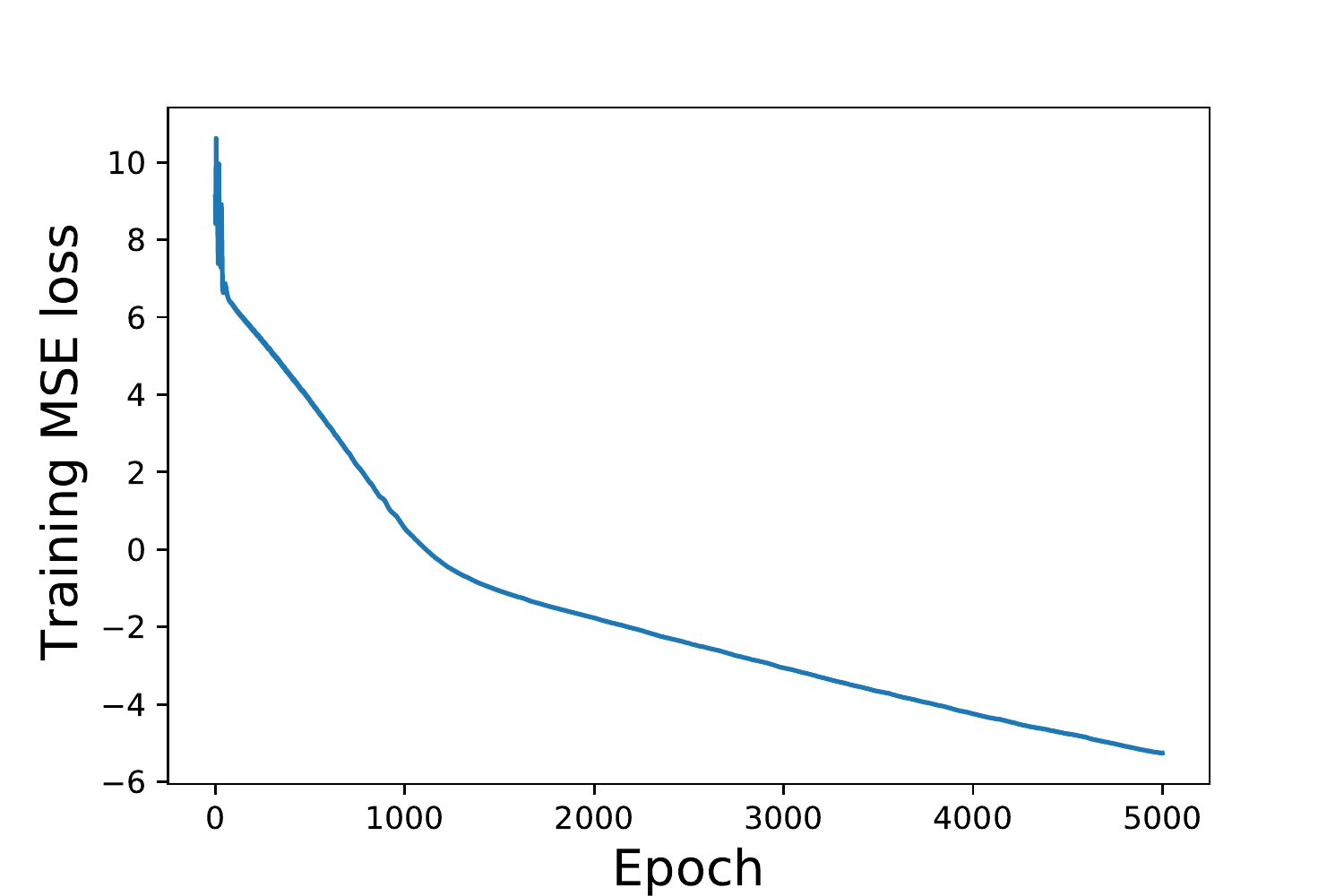}
\vspace{-0.2cm}
\captionof{figure}{Same setting as Table \ref{table1:compare with PLMs} except $n=100$ and $\mathbf{Z}$ is normalized. The loss is in logarithmic scale.}
\label{fig:NTK loss}
\end{figure}

Similar to the analysis in \cite{du2018gradient}, we remark that our analysis can be easily generalized from the continuous time analysis to the discrete time one, as well as to training both layers jointly. To see this, we train both layers of the same network as in Table \ref{table1:compare with PLMs} with the gradient descent. The linearity of the log training loss in Figure \ref{fig:NTK loss} confirms the exponential convergence rate.

\vspace{-0.25cm}
\section{Consistency of $\hat{\bm\beta}$}
\vspace{-0.15cm}
Denote $\mathcal{X}:=\mathbf D-\mathbb{E}(\mathbf D|\mathbf Z)$ and $\mathcal{Y}:=\mathbf y-\mathbb{E}(\mathbf y|\mathbf Z)$. If the nuisance function $\mathcal{M}$ is consistently estimated, then the standard OLS theory states that the OLS estimator $\bm{\hat\beta}=(\mathcal{X}^\top\mathcal{X})^{-1}\mathcal{X}^\top\mathcal{Y}$ is $\sqrt{n}$-consistent to $\bm\beta$, as $\sqrt{n}(\bm{\hat\beta}-\bm\beta)$ converges in probability. Though in reality, the errors in approximating the conditional expectation, incurred when learning $\mathcal{M}$, are unavoidable for any model including the neural networks. Thus they may cause $\bm{\hat\beta}$ to be inconsistent and the bias needs careful adjustment by the measurement error model (or errors-in-variables) theory as follows.

Suppose instead of $(\mathcal{X},\mathcal{Y})$, we only observe data $(\widetilde{\mathcal{X}},\widetilde{\mathcal{Y}}):=({\mathcal{X}}+\bm\epsilon_X,{\mathcal{Y}}+\bm\epsilon_Y)$ which are measured with independent errors $\bm\epsilon_X, \bm\epsilon_Y$. Then the estimator is
\begin{align}
\bm{\widetilde\beta}
=& (\widetilde{\mathcal{X}}^\top\widetilde{\mathcal{X}})^{-1}\widetilde{\mathcal{X}}^\top\widetilde{\mathcal{Y}}\\
\label{beta_hat}
=&\left(1-(\widetilde{\mathcal{X}}^\top\widetilde{\mathcal{X}})^{-1}\widetilde{\mathcal{X}}^\top\bm\epsilon_X\right)\bm\beta+(\widetilde{\mathcal{X}}^\top\widetilde{\mathcal{X}})^{-1}\widetilde{\mathcal{X}}^\top(\bm\epsilon_Y+\bm\epsilon).
\nonumber
\end{align}
With a careful error analysis in Appendix \ref{app:proof of prop}, we show that $\bm{\widetilde\beta}$ is consistent if $m_D$ is consistently approximated (meaning the MSE $\frac{\bm\epsilon_X^\top\bm\epsilon_X}{n}\to\sigma_X^2=0$). Otherwise, we give a correction based on $\bm{\widetilde\beta}$.

\begin{theorem}
Under the assumption of additive measurement errors and if
\begin{align*}
\mathbb{E}(\bm\epsilon_X)&=\mathbb{E}(\bm\epsilon_Y)=0
\\
\textup{Var}(\bm\epsilon_X)&=\sigma_X^2\mathbf{I}, \quad
\textup{Var}(\bm\epsilon_Y)=\sigma_Y^2\mathbf{I},
\end{align*}
then $(\mathbf{I}-\mathbf{R})^{-1}\widetilde{\bm\beta}$ is $\sqrt{n}$-consistent and so is $\bm{\widetilde\beta}$ if and only if $\sigma_X^2=0$, with $\mathbf{R}=\sigma_X^2\left(\textup{plim}\frac{\mathcal{X}^\top\mathcal{X}}{n}+\sigma_X^2\mathbf{I}\right)^{-1}$.

Furthermore, suppose the errors are Gaussian: $\bm\epsilon\sim\mathcal{N}(0,\sigma_\epsilon^2\mathbf{I}),\bm\epsilon_X\sim\mathcal{N}(0,\sigma_X^2\mathbf{I}), \bm\epsilon_Y\sim\mathcal{N}(0,\sigma_Y^2\mathbf{I})$, then
we have the asymptotic normality
\begin{align*}
\sqrt{n}\left(\bm{\widetilde\beta}-(\mathbf{I}-\mathbf{R})\bm\beta\right)\overset{\mathcal{D}}{\to}\mathcal{N}\left(0,\frac{\sigma_\epsilon^2+\sigma_Y^2}{\sigma_X^2}\mathbf{R}\right).
\end{align*}
\label{prop:consistency}
\end{theorem}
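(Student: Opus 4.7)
The plan is to read the statement as a classical errors-in-variables analysis of \eqref{beta_hat}, proceeding via probability limits of sample moment matrices, Slutsky's theorem, and a conditional CLT under the Gaussian assumption. First, writing $\widetilde{\mathcal{X}}=\mathcal{X}+\bm\epsilon_X$ and exploiting the mean-zero independence assumptions, the law of large numbers yields
\begin{align*}
\tfrac{1}{n}\widetilde{\mathcal{X}}^\top\widetilde{\mathcal{X}}&\overset{p}{\to}\Sigma+\sigma_X^2\mathbf{I},\\
\tfrac{1}{n}\widetilde{\mathcal{X}}^\top\bm\epsilon_X&\overset{p}{\to}\sigma_X^2\mathbf{I},\\
\tfrac{1}{n}\widetilde{\mathcal{X}}^\top(\bm\epsilon+\bm\epsilon_Y)&\overset{p}{\to}0,
\end{align*}
where $\Sigma:=\plim\tfrac{1}{n}\mathcal{X}^\top\mathcal{X}$; the cross-terms involving $\mathcal{X}^\top\bm\epsilon_X$ vanish, while $\tfrac{1}{n}\bm\epsilon_X^\top\bm\epsilon_X\to\sigma_X^2\mathbf{I}$ supplies the contamination. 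Plugging these plims into \eqref{beta_hat} via Slutsky yields $\plim\bm{\widetilde\beta}=(\mathbf{I}-\mathbf{R})\bm\beta$, from which $(\mathbf{I}-\mathbf{R})^{-1}\bm{\widetilde\beta}\to\bm\beta$ at the standard $\sqrt{n}$ rate because each of the above plims comes with a $1/\sqrt{n}$ CLT fluctuation. The ``iff'' claim follows immediately: assuming $\Sigma\succ 0$, the matrix $\mathbf{R}=\sigma_X^2(\Sigma+\sigma_X^2\mathbf{I})^{-1}$ vanishes precisely when $\sigma_X^2=0$.

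For the asymptotic normality I would decompose
\begin{align*}
\sqrt{n}\bigl(\bm{\widetilde\beta}-(\mathbf{I}-\mathbf{R})\bm\beta\bigr) = S_n^{-1}\tfrac{\widetilde{\mathcal{X}}^\top(\bm\epsilon+\bm\epsilon_Y)}{\sqrt{n}} + \sqrt{n}\,(\mathbf{R}-S_n^{-1}T_n)\bm\beta,
\end{align*}
where $S_n:=\tfrac{1}{n}\widetilde{\mathcal{X}}^\top\widetilde{\mathcal{X}}$ and $T_n:=\tfrac{1}{n}\widetilde{\mathcal{X}}^\top\bm\epsilon_X$. Under the Gaussian assumption $\bm\epsilon+\bm\epsilon_Y$ is independent of $\widetilde{\mathcal{X}}$ with i.i.d.\ $\mathcal{N}(0,\sigma_\epsilon^2+\sigma_Y^2)$ entries, so conditional on $\widetilde{\mathcal{X}}$ the first summand is exactly $\mathcal{N}\bigl(0,(\sigma_\epsilon^2+\sigma_Y^2)S_n^{-1}\bigr)$. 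Invoking Slutsky with $S_n^{-1}\to(\Sigma+\sigma_X^2\mathbf{I})^{-1}$ delivers the unconditional limit $\mathcal{N}\bigl(0,(\sigma_\epsilon^2+\sigma_Y^2)(\Sigma+\sigma_X^2\mathbf{I})^{-1}\bigr)$, and this rewrites as the stated $\tfrac{\sigma_\epsilon^2+\sigma_Y^2}{\sigma_X^2}\mathbf{R}$ upon recognizing $\mathbf{R}=\sigma_X^2(\Sigma+\sigma_X^2\mathbf{I})^{-1}$.

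The hardest part will be controlling the second summand $\sqrt{n}(\mathbf{R}-S_n^{-1}T_n)\bm\beta$: a first-order expansion around the plim identifies its $O_p(1)$ fluctuations as being driven by $\mathcal{X}^\top\bm\epsilon_X\bm\beta/\sqrt{n}$ and $(\bm\epsilon_X^\top\bm\epsilon_X-n\sigma_X^2\mathbf{I})\bm\beta/\sqrt{n}$, both of which threaten to inject $\|\bm\beta\|$-dependent variance into the limit. The key observation is that, because $\bm\epsilon_X$ is independent of $\{\bm\epsilon,\bm\epsilon_Y\}$, this piece is asymptotically uncorrelated with the leading Gaussian term, so the stated variance captures the dominant Gaussian fluctuation. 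A careful argument has to then either show the $\bm\epsilon_X$-driven fluctuations are negligible relative to $S_n^{-1}\widetilde{\mathcal{X}}^\top(\bm\epsilon+\bm\epsilon_Y)/\sqrt{n}$ under the Gaussian assumption, or absorb them into a sandwich-style variance; the bookkeeping around this error expansion, rather than any single deep technicality, is what the proof must manage carefully.
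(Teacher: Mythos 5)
Your proposal follows essentially the same route as the paper's proof in Appendix C: write $\widetilde{\mathcal{Y}}=\widetilde{\mathcal{X}}\bm\beta-\bm\epsilon_X\bm\beta+\bm\epsilon+\bm\epsilon_Y$, expand $\bm{\widetilde\beta}$ accordingly, take probability limits of $\tfrac{1}{n}\widetilde{\mathcal{X}}^\top\widetilde{\mathcal{X}}$, $\tfrac{1}{n}\widetilde{\mathcal{X}}^\top\bm\epsilon_X$ and $\tfrac{1}{n}\widetilde{\mathcal{X}}^\top(\bm\epsilon+\bm\epsilon_Y)$, conclude $\plim\bm{\widetilde\beta}=(\mathbf{I}-\mathbf{R})\bm\beta$, and then extract the limiting covariance from $S_n^{-1}\widetilde{\mathcal{X}}^\top(\bm\epsilon+\bm\epsilon_Y)/\sqrt{n}$ via Slutsky and the (conditional) Gaussianity of $\bm\epsilon+\bm\epsilon_Y$. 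The consistency statement, the identification of $\mathbf{R}=\sigma_X^2(\Sigma+\sigma_X^2\mathbf{I})^{-1}$, and the ``iff'' argument all match the paper's reasoning.

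The one place you stop short is the remainder $\sqrt{n}\,(\mathbf{R}-S_n^{-1}T_n)\bm\beta$, which you correctly diagnose as $O_p(1)$ with fluctuations driven by $\mathcal{X}^\top\bm\epsilon_X\bm\beta/\sqrt{n}$ and $(\bm\epsilon_X^\top\bm\epsilon_X-n\sigma_X^2\mathbf{I})\bm\beta/\sqrt{n}$, but leave unresolved. You should know that the paper's own proof simply discards this term without comment when it passes from $\sqrt{n}\bigl(\bm{\widetilde\beta}-(\mathbf{I}-\mathbf{R})\bm\beta\bigr)$ to $\plim(S_n)^{-1}\widetilde{\mathcal{X}}^\top(\bm\epsilon_Y+\bm\epsilon)/\sqrt{n}$. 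Your instinct is right that this term is not negligible in general: it contributes additional, $\bm\beta$-dependent variance to the limit, so the displayed covariance $\tfrac{\sigma_\epsilon^2+\sigma_Y^2}{\sigma_X^2}\mathbf{R}$ accounts only for the $\bm\epsilon+\bm\epsilon_Y$ contribution and is exact only in degenerate cases (e.g. $\bm\beta=0$, or $\sigma_X^2=0$ where the normality statement itself becomes vacuous). So your proposal is not weaker than the paper's argument --- it is more candid about exactly the step the paper glosses over --- but to actually complete the proof one would have to either establish negligibility of the $\bm\epsilon_X$-driven piece under extra assumptions or fold it into a sandwich-form covariance, and neither your sketch nor the paper does this.
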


\vspace{-0.25cm}
\section{DebiNet: Debiasing Neural Network }
\vspace{-0.15cm}
Before introducing our debiasing network, DebiNet, we revisit two broadly used approaches to debias Lasso. The first method is OLS post-Lasso \cite{belloni2013least}, which uses the features selected by the Lasso, i.e. $\vect{D}=\vect{X}_S$, to fit an OLS with $\vect{y}$ as the response variable and obtains $\bm{\hat\beta}_\text{OLS}$. The final estimator substitutes the non-zero entries in the Lasso estimator $\bm{\hat\theta}_\text{Lasso}$ by the OLS estimator:
\vspace{-0.1cm}
\begin{align}
[\bm{\hat\theta}_\text{OLS post-Lasso}]_j=
\begin{cases}
0& \text{ if } [\bm{\hat\theta}_\text{Lasso}]_j=0
\\
[\bm{\hat\beta}_\text{OLS}]_j& \text{ if } [\bm{\hat\theta}_\text{Lasso}]_j\neq 0
\end{cases}
\end{align}
The second approach is the debiased (or desparsified) Lasso \cite{zhang2014confidence,van2014asymptotically},
\vspace{-0.1cm}
\begin{align}
	\bm{\hat\theta_\text{debiased Lasso}}=\bm{\hat\theta}_\text{Lasso}+\hat\Sigma_\text{Lasso}^{-1} \vect{X}^\top(\vect{y}-\vect{X}\bm{\hat\theta}_\text{Lasso})
\end{align}
where $\hat\Sigma_\text{Lasso}^{-1}$ is a pseudo-inverse of $\vect{X}^\top \vect{X}$, obtained by nodewise Lasso regressions.

Notice that OLS post-Lasso only partly debiases $\bm{\hat\theta}_\text{Lasso}$ while the debiased Lasso debiases all elements in $\bm{\hat\theta}_\text{Lasso}$. The main difference between these approaches is the information they exploit: OLS post-Lasso requires only the information of $\vect{X}_S$ and the indicator $\mathbb{I}(\bm{\hat\theta}_\text{Lasso}\neq 0)$, and the debiased Lasso uses the entire $\vect{X}$ and $\bm{\hat\theta}_\text{Lasso}$. Nevertheless, both methods actually connect to PLM: one can view the OLS post-Lasso as a special case of PLM with $f=0$ and the nodewise Lasso regression indeed learns $m_D$ in a semi-parametric regime.

Now we are ready to present our debiasing model.
\begin{algorithm}[!htb]
\centering 
\caption{DebiNet}
\begin{algorithmic}
\STATE{\textbf{Input}: Data matrix $\mat{X}$, label $\vect{y}$} \\
\STATE{\quad\quad 1. fit $\vect y\sim \mat{X}$ via any feature selection model to obtain $\mat{D}:=\mat{X}_{S}$ and $\mat{Z}:=\mat{X}_{S}^C$;}
\\
\STATE{\textbf{Estimation of $\bm\beta$:}}
\\
\STATE{\quad\quad 2. fit $[\vect y,\mat D]\sim \mat Z$ via over-parameterized neural network to derive $\mathbb{E}([\vect y,\mat D]|\mat Z)$;}
\\
\STATE{\quad\quad 3. fit $\vect y-\mathbb{E}(\vect y|\mat Z)\sim \mat D-\mathbb{E}(\mat D|\mat Z)$ via OLS to derive $\bm{\hat\beta}$;}
\\
\STATE{\textbf{Prediction of $\vect y$ and Estimation of $f$:}}
\\
\STATE{\quad\quad 4. define $\vect {\hat y}:=\mathbb{E}(\vect y|\mat Z)+(\mat D-\mathbb{E}(\mat D|\mat Z))\bm{\hat\beta}$ and $\hat f(\mat Z):=\mathbb{E}(\vect y|\mat Z)-\mathbb{E}(\mat D|\mat Z)\bm{\hat\beta}$.}
\end{algorithmic}
\end{algorithm}

Similar to the debiased Lasso, DebiNet also exploits the unselected features and thus avoids the model mis-specification error in OLS post-Lasso. For example, suppose that the sparsity of $\bm\beta$ is $k$ out of $p$ (i.e. there are $k$ important features) and that Lasso fails to select all them, then it is impossible for the OLS post-Lasso model ($\mathbf{\hat y}_\text{OLS post-Lasso}=\mathbf{D}\bm{\hat\beta}_\text{OLS}$) to learn the true model nor to be consistent. However, under the debiased Lasso ($\mathbf{\hat y}_\text{debiased Lasso}=\mathbf{X}\bm{\hat\theta}_\text{debiased Lasso}$) and DebiNet, the true model is learnable. We highlight that the mis-specification (when true positive rate is not 1) is common in high dimension and high sparsity \cite{su2017false,donoho2009observed,donoho2005neighborly,donoho2006high}, known as the Donoho-Tanner phase transition. Therefore, it is vital to remedy the failure of variable selection by using the signals left in $\mathbf{Z}$ (see Table \ref{table2:debiasing}).

\begin{table*}[!htb]
\centering
\begin{tabular}{|cccccc|}
\hline 
Methods & Estimation MSE & Train MSE & Test MSE & 95\% Coverage &  sec/run\\
\hline 
\toprule
\multicolumn{6}{|c|}{$n=1000$; High dimension $p=3000$; High sparsity $k=300$}\\
Lasso & 0.536($\pm$0.058 ) & 286.218($\pm$9.766 ) & 296.112($\pm$25.615 ) & -& -\\
OLS post-Lasso & 1.402($\pm$0.309 ) & 233.363($\pm$11.050 ) & 308.440($\pm$25.241 ) & 0.558& -\\
debiased Lasso & 2.019($\pm$0.290 ) & 4998.75($\pm$364.669 ) & 1100.68($\pm$111.53 ) & 0.000 & 154\\
DebiNet & \textbf{0.421}($\pm$0.418 ) & \textbf{136.376}($\pm$54.195 ) & \textbf{278.625}($\pm$30.932 ) & \textbf{0.830}& 7(60)\\
NW post-Lasso & 1.398 ($\pm$0.307 ) & 229.193 ($\pm$10.860 ) & 308.399 ($\pm$25.301 ) & 0.560& -\\
\midrule
\multicolumn{6}{|c|}{$n=1000$; High dimension $p=3000$; Low sparsity $k=10$}\\
Lasso & 0.264($\pm$0.027 ) & 3.531 ($\pm$0.131 ) & 3.634 ($\pm$0.411 ) & -& -\\
OLS post-Lasso & \textbf{0.001} ($\pm$ 0.000 ) & \textbf{0.992} ($\pm$0.057 ) & \textbf{1.009} ($\pm$0.086 ) & \textbf{0.958}& -\\
debiased Lasso & 0.004 ($\pm$0.002 ) & 55.024 ($\pm$3.295 ) & 14.000 ($\pm$1.410 ) & 0.702 & 150\\
DebiNet & \textbf{0.001} ($\pm$0.000 ) & \textbf{0.989} ($\pm$0.056 ) & \textbf{1.012} ($\pm$0.086 ) & \textbf{0.956}& 3(21)\\
NW post-Lasso & \textbf{0.001} ($\pm$0.000 ) & \textbf{0.975} ($\pm$0.056 ) & \textbf{1.009} ($\pm$0.084 ) & \textbf{0.960}& -\\
\midrule
\multicolumn{6}{|c|}{$n=1000$; Low dimension $p=500$; High sparsity $k=400$}\\
OLS & \textbf{0.003} ($\pm$0.000 ) & \textbf{0.378} ($\pm$0.031 ) & \textbf{2.625} ($\pm$0.313 ) & \textbf{0.955}& -\\
Lasso & 0.458 ($\pm$0.021 ) & 175.803 ($\pm$3.846 ) & 245.660 ($\pm$25.020 ) & -& -\\
OLS post-Lasso & 0.175 ($\pm$0.025 ) & 88.959 ($\pm$6.314 ) & 211.952 ($\pm$21.839 ) & 0.927& -\\
debiased Lasso & 0.299 ($\pm$0.036 ) & 122.515 ($\pm$12.222 ) & 122.204 ($\pm$13.268 ) & 0.123& 4\\
DebiNet & 0.175 ($\pm$0.025 ) & 88.953 ($\pm$6.324 ) & 211.877 ($\pm$21.926 ) & 0.928& 2(6)\\
NW post-Lasso & 0.174 ($\pm$0.025 ) & 87.132 ($\pm$6.195 ) & 211.007 ($\pm$21.821 ) & 0.927& -
\\
\bottomrule
\end{tabular}
\caption{Comparison of debiasing methods in 50 independent runs. See Appendix \ref{table2:data} for data generation details. In the `sec/run' column, `-' means $<1$ second. Here we record two time for DebiNet, one for GPU acceleration and the other in bracket for CPU. NW post-Lasso applies PLM-NW, instead of PLM-NN, after Lasso.}
\label{table2:debiasing}
\end{table*}

\begin{figure*}[!htb]
\centering
\hbox{
\begin{subfigure}{0.48\linewidth}\includegraphics[width=\linewidth]{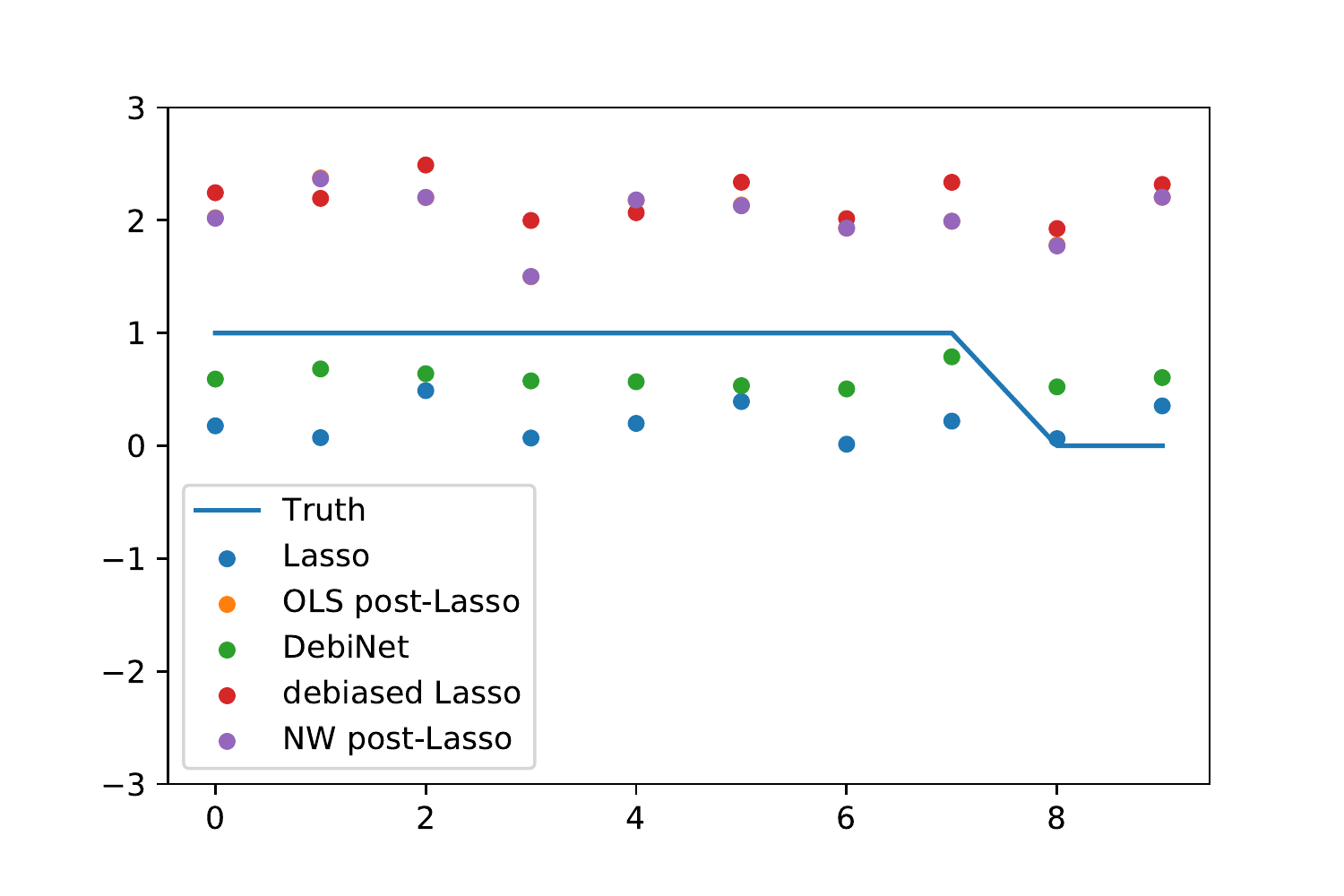}
\end{subfigure}
\hspace{-1cm}
\begin{subfigure}{0.48\linewidth}\includegraphics[width=\linewidth]{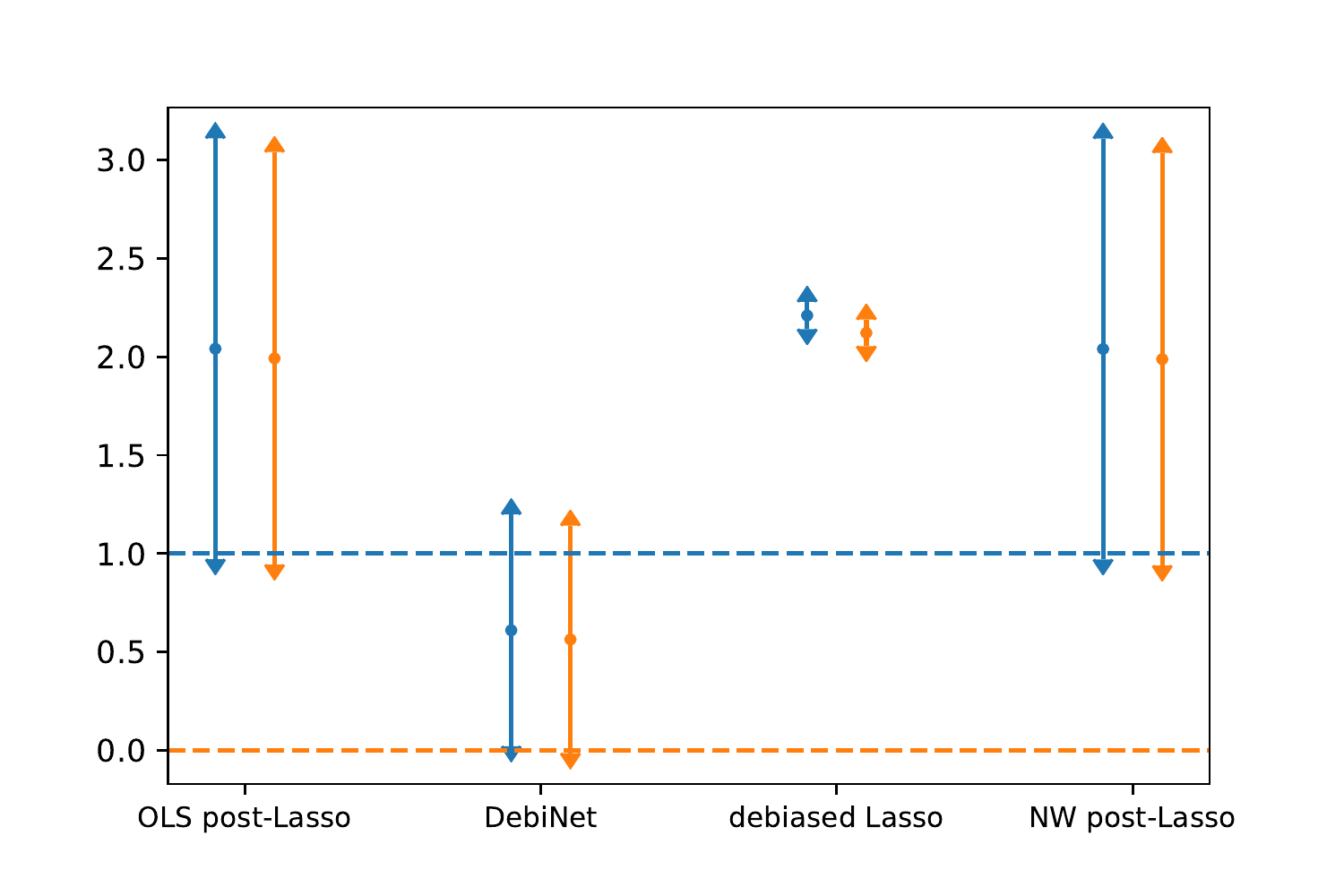}
\end{subfigure}
}
\vspace{-0.3cm}
\hbox{
\begin{subfigure}{0.48\linewidth}\includegraphics[width=\linewidth]{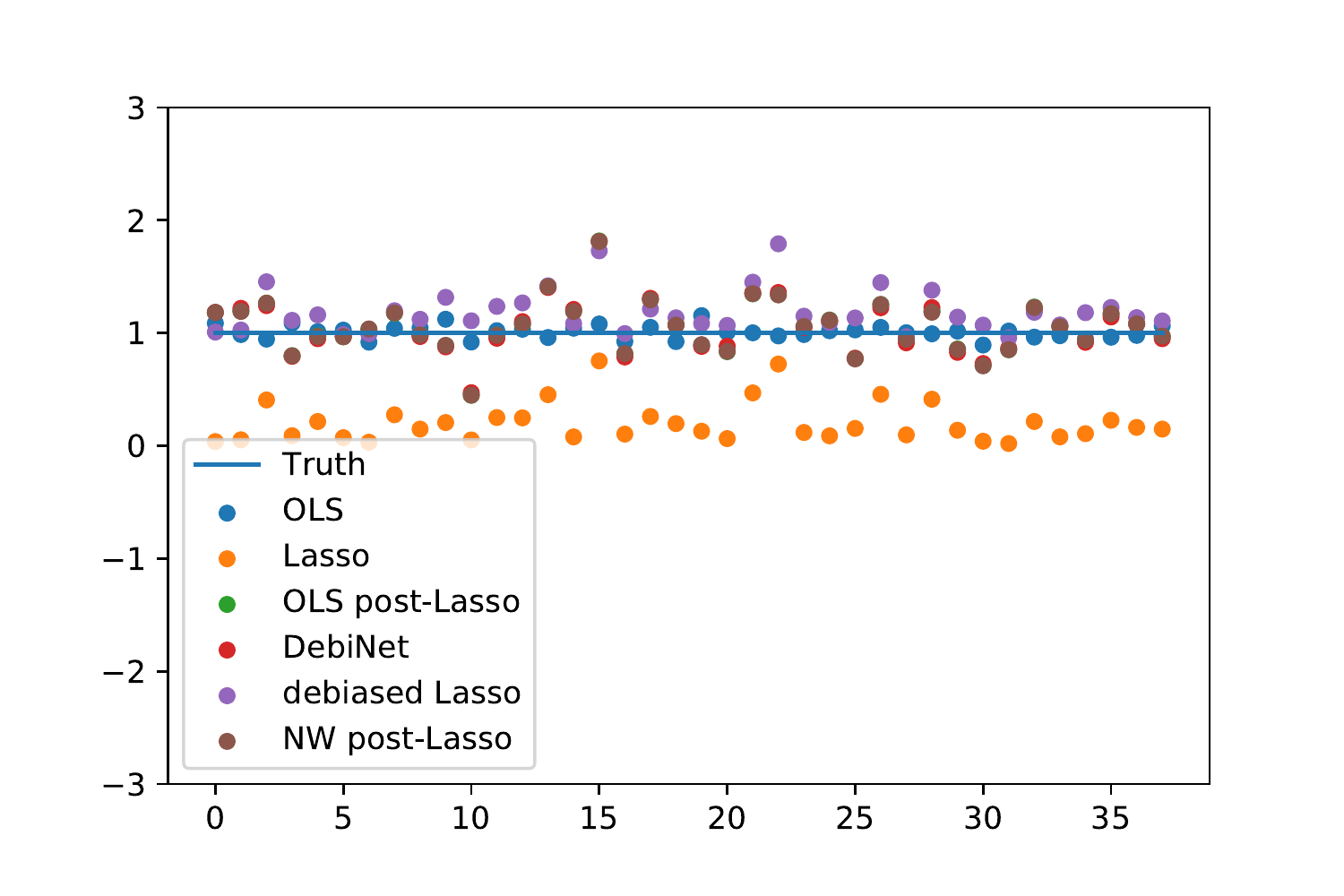}
\end{subfigure}
\hspace{-1cm}
\begin{subfigure}{0.48\linewidth}\includegraphics[width=\linewidth]{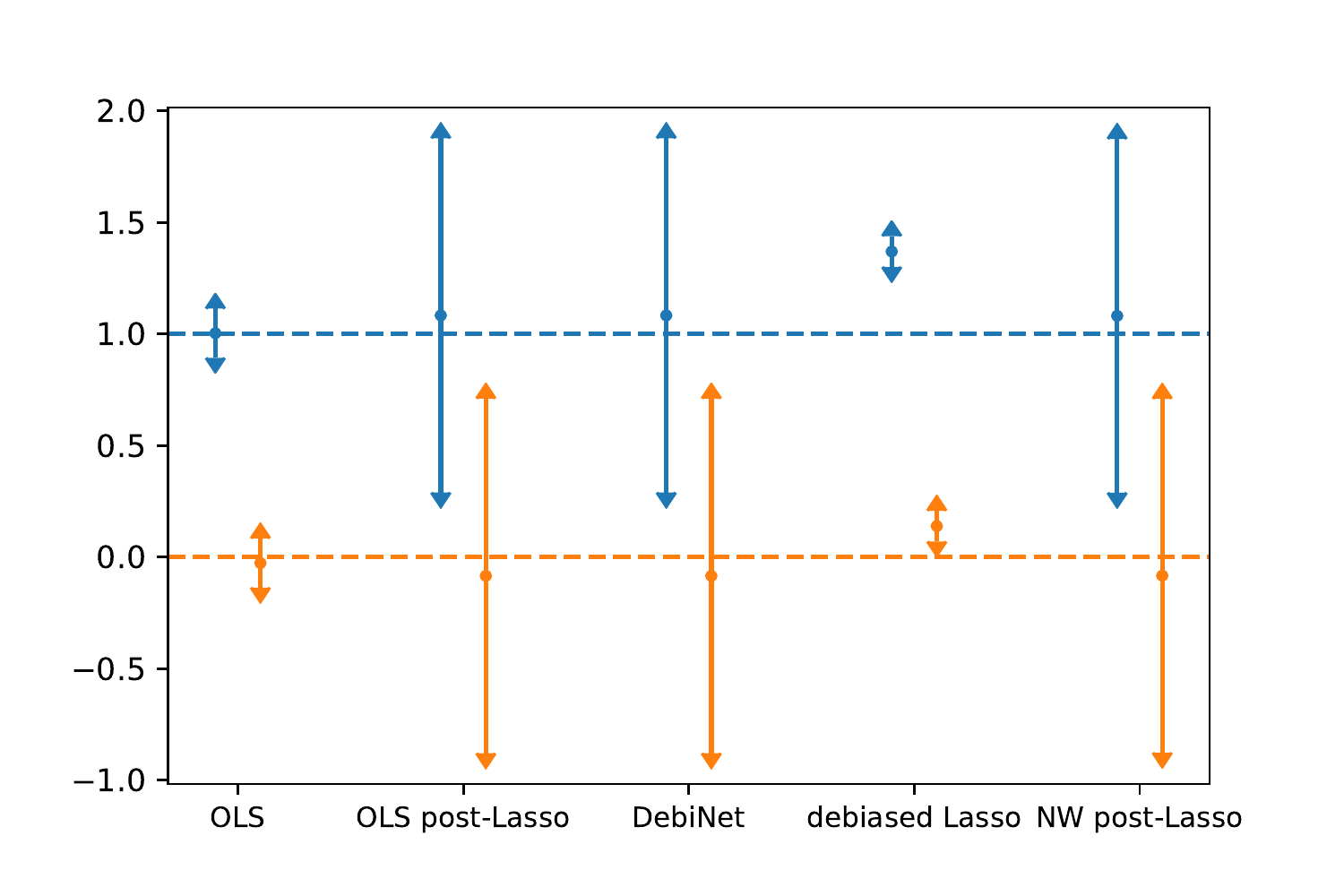}
\end{subfigure}
}
\vspace{-0.3cm}
\caption{Comparison of debiasing methods on Lasso. $\mathbf X\in\mathbb{R}^{1000\times p}, \text{Var}(\epsilon)=1$. Top plots: $p= 3000,\lambda=2, k=300$. Bottom plots: $p=500,\lambda=1$; left $k=100$; right $k=300$. $\bm\beta$ is binary and we plot the $95\%$ confidence intervals according to $\beta_j=1$ (blue) and $\beta_j=0$ (orange).}
\label{fig:low dim}
\end{figure*}

On the other hand, DebiNet also shares some similarity with the OLS post-Lasso as both methods only need the information in $\mathbb{I}(\bm{\hat\theta}_\text{Lasso}\neq 0)$, both debias the parameters of interest $\bm\beta$ and both make use of the OLS which requires $p_L<n$. It is remarkable that DebiNet only fits three models and the OLS post-Lasso fits two models, while the debiased Lasso needs $p$ nodewise regression to construct $\hat\Sigma_\text{Lasso}^{-1}$. The computation cost can be immense in high dimension.

In Table \ref{table2:debiasing} and Figure \ref{fig:low dim}, we compare different debiasing methods under various metrics over the active set $\{j:[\bm{\hat\theta}_{\text{Lasso}}]_j\neq 0\}$. 
We emphasize that, since this work focuses on the consistency of estimation and the effect of debiasing, we must experiment on synthetic data where the truth $\bm\beta$ is known, in order to compute the estimation MSE and the statistical coverge. In all settings, DebiNet outperforms other methods (except the golden rule, OLS, in the low dimension) and demonstrates its robustness against high dimension and high sparsity.
We note that the debiased Lasso exhibits strange behaviors in very high dimension as its working assumption requires the sparsity $k=o(\sqrt{n}/\log{p})$ which is roughly 9. In our high sparsity setting, $k=300\gg 9$ and the debiased Lasso, giving narrow confidence intervals, may not perform well. In addition, the objective of the debiased Lasso is to reduce the estimation error, instead of to minimize the prediction error. These two goals are unified in low dimension but not necessarily in high dimension. 
Moreover, the coverage is computed over the active set of Lasso, while the debiased Lasso may offer higher coverage outside the active set \cite{li2017debiasing}. Lastly, we remark that arbitrary variable selection methods can be plugged into DebiNet, as long as the true model is linear and the number of selected variables is less than the sample size: the feature selection only affects the choice of $[\mathbf{D},\mathbf{Z}]$, but not the subsequent convergence and consistency analyses.
\vspace{-0.25cm}
\section{A Study of Treatment Effect on 401(k) Data}
\label{sec:7}
\vspace{-0.15cm}

In this section, we analyze the 1991 Survey of Income and Program Participation data in \cite{chernozhukov2018double}, to study the average treatment effect (ATE) which estimates the impact of household income (our treatment variable $\mathbf{D}\in\mathbb{R}^{9915\times 1}$) on the net financial assets (our response variable $\mathbf{y}\in\mathbb{R}^{9915}$), after accounting for the confounding factors $\mathbf{Z}\in\mathbb{R}^{9915\times 9}$ such as age, family size, years of
education, etc. The ATE, or $\beta\in\mathbb{R}$, is expected to be positive according to \cite{poterba1994401,poterba1995401}: when the 401(k) plan program started, people did not base job decisions on the retirement offers but rather on the income. 
\begin{figure}
\includegraphics[width=0.48\textwidth]{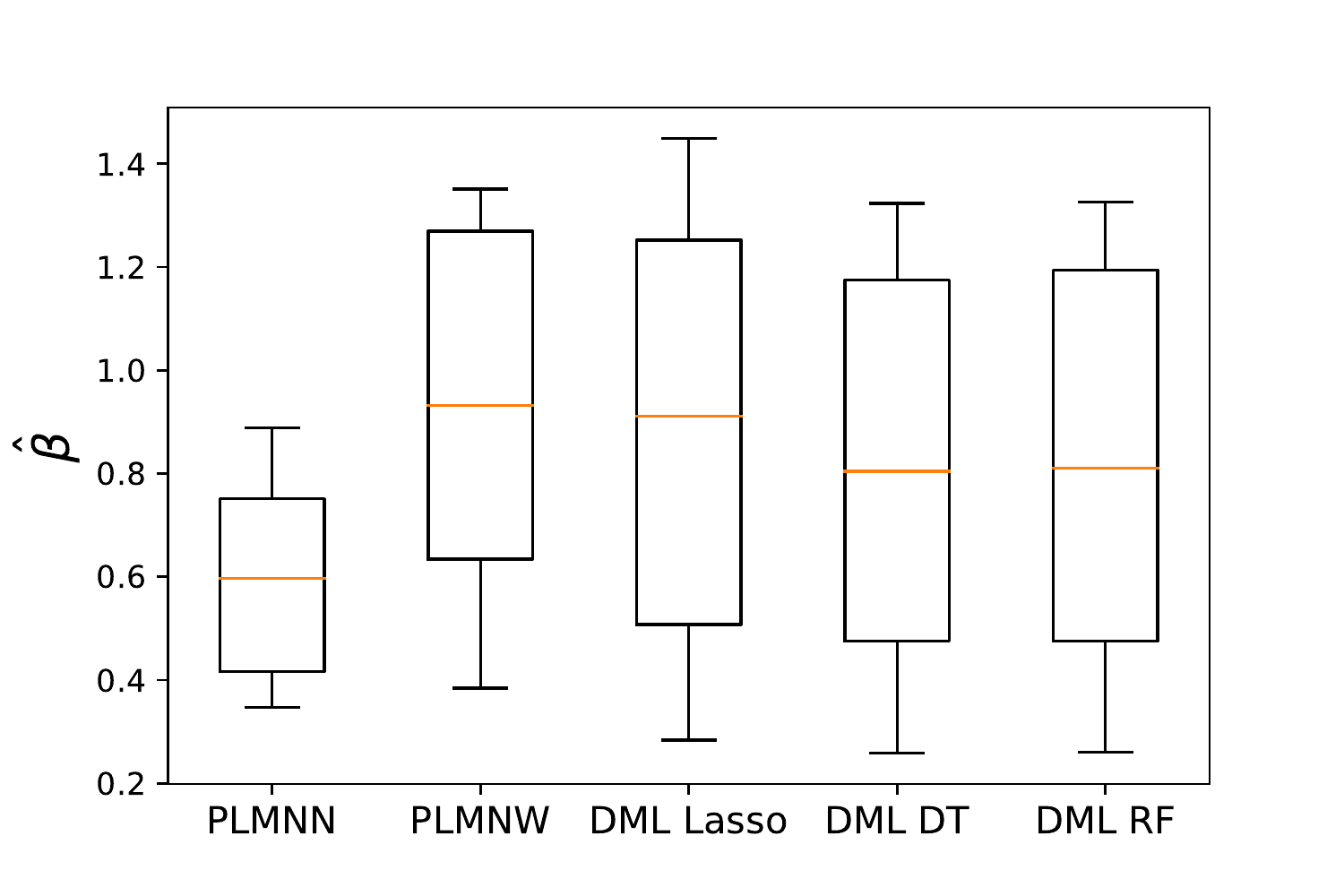}
\captionof{figure}{Estimation of $\bm{\hat\beta}$ by different PLMs.}
\label{fig:compare with PLMs}
\end{figure}
\begin{table}[]
    \centering
\begin{tabular}{|cccc|}
\hline 
Methods & Mean $\hat\beta$ & Median $\hat\beta$& SE $\hat\beta$
\\
\hline
\toprule
PLM-NN & 0.590 & 0.596 & 0.186 \\
PLM-NW & 0.912 & 0.931 & 0.351\\
DML Lasso & 0.887 & 0.911 & 0.415 \\
DML DT & 0.805 & 0.804 & 0.385\\
DML RF & 0.809 & 0.810 & 0.388\\
\bottomrule
\end{tabular}
\captionof{table}{Comparison of treatment effects by different PLMs on the 401(k) dataset.}
\label{table:compare with debias}
\end{table}

Reassuringly, the results obtained from different PLMs are relatively consistent with each other across different sub-samplings. We note that the PLM theory indeed supports such consistency and thus over-parameterized neural networks can fit in DML, even without sample-splitting. We also observe that PLM-NN has much smaller standard error (SE) than other methods, suggesting that it can be more efficient in estimation.

\vspace{-0.25cm}
\section{Discussion}
\vspace{-0.15cm}
In this paper, we propose to use over-parameterized neural networks in the partially linear model (PLM-NN). We show that PLM-NN is computationally efficient and provably converges to global minimum exponentially fast. We demonstrate its robustness to high dimension and high sparsity, and its strong performance in terms of the estimation and prediction errors. Based on this new PLM, we design DebiNet, a debiasing framework for arbitrary feature selection method and illustrate its potential to infer and to predict accurately (e.g. in adversarial attack in Appendix \ref{AppendixD2}). Now we discuss some future directions.

As also discussed in \cite{du2018gradient,du2018gradient2}, our framework can easily extend to deep over-parameterized neural networks: for $L$-hidden-layer neural network with equally wide $l$-th layer $\{\w_r^{(l)}\}_{r\in [m]}$, it has been shown that the last hidden layer's kernel (similar to the notation in \eqref{eqn:H_t}): $\sum_{r=1}^{m}\langle\frac{\partial F_s(\mat{W},\vect{A},\vect{Z}_i)}{\partial \vect{w}_r^{(L)}}, \frac{\partial F_h(\mat{W},\mat{A},\vect{Z}_j)}{\partial \vect{w}_r^{(L)}}\rangle$ is positive definite, and so is the overall NTK with respect to all weights. Hence we can guarantee linear convergence for deep over-parameterized neural networks. In addition, the width requirement of the hidden layer can be tightened by more advanced matrix perturbation theory. Based on our experiments in Appendix \ref{AppendixE}, we may also use other activation functions, optimizers (e.g. Adam \cite{kingma2014adam}) and losses (e.g. Huber loss \cite{huber1992robust}) and still observe the linear convergence. 
We note that PLM-NN does not perform well when $\vect{y}$ or $\vect{D}$ is discrete, or when $\vect{y}$ and $\vect{D}$ have different scales. It would be desirable to robustify the model against the input distributional assumption, e.g. using the generalized partially linear model within DebiNet.
\newpage

\medskip

\printbibliography

\clearpage
\onecolumn

\aistatstitle{Supplement Material for `DebiNet: Debiasing Linear Models with Nonlinear Overparameterized Neural Networks'}
\appendix
\section{Preliminaries of Partially Linear Models}
In this section, we revisit some partially linear models from an algorithmic perspective. The first one is the PLM with kernel regressions.

\begin{algorithm}[!htb]
	\centering 
	\caption{Partially Linear Model with Kernels (PLM-NW)}
	\begin{algorithmic}
	\STATE{\textbf{Input}: Data matrix $[\mat{D},\mat{Z}]$, label $\vect{y}$} \\
	\STATE{\textbf{Estimation of $\bm\beta$:}}
	\\
	\STATE{\quad\quad 1. fit $\vect{y}\sim \mat{Z}$ via kernel regression to derive $\mathbb{E}(\vect{y}|\mat{Z})$;}
	\\
	\STATE{\quad\quad 2. fit $\mat{D}\sim \mat{Z}$ via kernel regression to derive $\mathbb{E}(\mat{D}|\mat{Z})$;}
	\\
	\STATE{\quad\quad 3. fit $\vect{y}-\mathbb{E}(\vect{y}|\mat{Z})\sim \mat{D}-\mathbb{E}(\mat{D}|\mat{Z})$ via OLS to derive $\bm{\hat{\beta}}$;}
	\\
	\STATE{\textbf{Estimation of $f$ and Prediction of $\vect{y}$:}}
	\\
	\STATE{\quad\quad 4. fit $\vect{y}-\mat{D}\bm{\hat{\beta}}\sim \mat{Z}$ via kernel regression to derive $\hat f$ and define $\vect{\hat {y}}=\mat{D}\bm{\hat{\beta}}+\hat f(\mat{Z})$.}
	\end{algorithmic}
\label{alg:plm kernel}
\end{algorithm}
The kernel regression estimates the conditional expectation as a locally weighted average, using a kernel as a weighting function. For example, one may use the Nadaraya–Watson (NW) estimator $\widehat{m}_y(\vect{z})$ to learn $\mathbb{E}(\mat{y}|\mat{Z})$ as follows:
\begin{align}
\widehat{m}_y(\vect{z})=\frac{\sum_{i=1}^{n} \psi\left(\frac{\vect{z}-\vect{Z}{i}}{h_y}\right)y_{i}}{\sum_{i=1}^{n} \psi\left(\frac{\vect{z}-\vect{Z_{i}}}{h_y}\right)} 
\end{align}
where $h_y$ is a bandwidth whose size is related to the dependence of $\vect{y}$ on $\mat{Z}$ and $\psi(\cdot)$ is the kernel. Popular choices of kernels include uniform, triangle, Epanechnikov, Gaussian, quadratic and cosine. We use the Gaussian kernel, which is also known as the radial basis function (RBF) kernel, throughout the paper.

The second model is the DML with sample-splitting. We take a $K$-fold random partition $\{I_j\}_{j=1}^K$ of the indices $[n]$.

\begin{algorithm}[!htb]
	\centering 
	\caption{Double/Debiased Machine Learning (DML)}
	\begin{algorithmic}
	\STATE{\textbf{Input}: Data matrix $[\mat{D},\mat{Z}]$, label $\vect{y}$} \\
	\FOR{$j\in [K]$}
	\STATE{\quad\quad 1. fit $\vect{y}_{I_j}^C\sim \mat{Z}_{I_j}^C$ via some machine learning method to learn $\mathbb{E}(\vect{y}|\mat{Z})$;}
	\\
	\STATE{\quad\quad 2. fit $\mat{D}_{I_j}^C\sim \mat{Z}_{I_j}^C$ via some machine learning method to learn $\mathbb{E}(\mat{D}|\mat{Z})$;}
	\\
	\STATE{\quad\quad 3. fit $\vect{y}_{I_j}-\mathbb{E}(\vect{y}_{I_j}|\mat{Z}_{I_j})\sim \mat{D}_{I_j}-\mathbb{E}(\mat{D}_{I_j}|\mat{Z}_{I_j})$ via OLS to derive $\bm{\hat{\beta}}$, denoted as $\bm{\hat{\beta}}^{(j)}$;}
	\\
	\ENDFOR
	\STATE{4. aggregate the estimators: $\bm{\hat\beta}=\sum_j \bm{\hat{\beta}}^{(j)}/K.$}
\end{algorithmic}
\label{alg:DML}
\end{algorithm}
We note that the DML does not explicitly predict $\vect{y}$, hence we add an extra step to accomplish this. Denote the estimators in step 1 and step 2 as $m_y^{(j)}$ and $m_D^{(j)}$. We aggregate the estimates of $\mathbb{E}(\vect{y}|\mat{Z})$ and $\mathbb{E}(\mat{D}|\mat{Z})$ from the $K$ estimators and predict 
$$\vect {\hat y}:=\sum_j m_y^{(j)}(\mat{Z})/K+\left(\mat D-\sum_j m_D^{(j)}(\mat{Z})/K\right)\bm{\hat\beta}.$$

We remark that the choice of the machine learning methods to use is flexible. For instance, one may use Lasso if $\mat{Z}$ is high dimensional or use the logistic regression if $\mat{D}$ is categorical.

\section{Proof of Theorem \ref{thm:main_gf}}
\label{AppendixB}
Denote label as $\vect{y}\in \mathbb{R}^{n}$, data as $\mat{X}\in \mathbb{R}^{n\times p}$, the features selected by Lasso as $\vect{D} \in \mathbb{R}^{n\times p_{L}}$ and the rest as $\vect{Z} \in  \mathbb{R}^{n\times p_{N}}$, where $p_L+p_N = p$. Recall that $\mat{W} \in \mathbb{R}^{(p_N)\times m}, \mat{A} \in \mathbb{R}^{m\times (1+p_L)}$ are the weights in first and second layers respectively.
Here we consider a neural network of the following form.
\begin{align*}
F(\mat{W},\vect{A}, \vect{z}) = \frac{1}{\sqrt{m}}\sum_{r=1}^{m}\mat{A}_r \relu{\vect{w}_r^\top \vect{z}}
\end{align*}
where $\vect{w}_r$ and $\mat{A}_r$ are the weights corresponding to the $r$-th neuron in the hidden layer. The input $\vect{z} \in \mat{R}^{p_N}$ and $\relu{\cdot}$ is the ReLU activation function.
To be more clear, we note the $s$-th dimension of $F$ as $F_s(\mat{W},\vect{A}, \vect{z})$, which means for $s=1,2,\cdots,1+p_L$,
\begin{align}
F_s(\mat{W},\mat{A}, \vect{z}) 
=[F(\mat{W},\mat{A}, \vect{z}) ]_s= \frac{1}{\sqrt{m}}\sum_{r=1}^{m}A_{rs} \relu{\vect{w}_r^\top \vect{z}}
\label{eq:network form}
\end{align}
Here $A_{rs}$ is a scalar representing the output weight in the second layer. Given the dataset $\left\{(\vect{Z}_i,\vect{M}_i)\right\}_{i=1}^n$ with the multivariate response $\vect{M}:=[\vect{y}, \vect{D}]$. We aim to minimize 
\begin{align*}
 L(\mat{W},\vect{A}) = \sum_{i=1}^{n}\frac{1}{2}\left\|F(\mat{W},\vect{A},\vect{Z}_i)-\vect{M}_i\right\|_2^2.
\end{align*}
Formally, we consider the gradient flow of the gradient descent defined by:
\begin{align*}
\frac{d\vect{w}_r(t)}{dt} = - \frac{\partial L(\mat{W}(t),\vect{A})}{\partial \vect{w}_r(t)}
\end{align*} for $r=1,\ldots,m$. Simple chain rule gives the MSE loss derivative with respect to each weight vector $\vect{w_r}$ as
\begin{align}
\frac{\partial L(\mat{W},\mat{A})}{\partial \vect{w}_r} 
&= \sum_{i=1}^n\sum_{s=1}^{(1+p_L)}(F_s(\mat{W},\mat{A},\vect{Z}_i) - M_{is})\frac{\partial F_s(\mat{W},\mat{A},\vect{Z}_i)}{\partial \vect{w}_r} \nonumber \\
&= \frac{1}{\sqrt{m}}\sum_{i=1}^n\sum_{s=1}^{(1+p_L)}(F_s(\mat{W},\mat{A},\vect{Z}_i)-{M}_{is})\vect{A}_{rs}\vect{Z}_i\indict\left\{\vect{w}_r^\top \vect{Z}_i \ge 0\right\}, \end{align} 
as the form in \eqref{eq:network form} indicates
\begin{align}
\frac{\partial F_s(\mat{W}(t),\vect{A},\vect{Z}_i)}{\partial \vect{w}_r(t)}&=\frac{1}{\sqrt{m}}\vect{A}_{rs}\vect{Z}_i\indict\left\{\vect{w}_r^\top \vect{Z}_i \ge 0\right\}
\label{eqn:gradient}
\end{align}

Let us shorthand $u_{is}(t)=F_s(W(t), A, Z_i)$. The dynamics of each dimension of one prediction is again given by the chain rule,
\begin{align}
	\frac{d}{dt} u_{is}(t) 
	= &\sum_{r=1}^{m}\langle\frac{\partial F_s(\mat{W}(t),\vect{A},\vect{Z}_i)}{\partial \vect{w}_r(t)},\frac{d \vect{w}_r(t)}{dt} \rangle \nonumber\\
	= &\sum_{r=1}^{m}\langle\frac{\partial F_s(\mat{W},\vect{A},\vect{Z}_i)}{\partial \vect{w}_r}, \sum_{j=1}^n\sum_{h=1}^{(1+p_L)}(M_{jh} - F_h(\mat{W},\mat{A},\vect{Z}_j) )\frac{\partial F_h(\mat{W},\mat{A},\vect{Z}_j)}{\partial \vect{w}_r}\rangle \nonumber \\
	= &\sum_{h=1}^{(1+p_L)}\sum_{j=1}^n(M_{jh} - F_h(\mat{W},\mat{A},\vect{Z}_j))\sum_{r=1}^{m}\langle\frac{\partial F_s(\mat{W},\vect{A},\vect{Z}_i)}{\partial \vect{w}_r}, \frac{\partial F_h(\mat{W},\mat{A},\vect{Z}_j)}{\partial \vect{w}_r}\rangle \nonumber \\
	=& \sum_{h=1}^{(1+p_L)}\sum_{j=1}^{n}(M_{jh}-u_{jh}) (\mat{H}_{sh})_{ij}(t).
	\label{eq:dynamics individual}
\end{align}
Here we define the $n \times n$ matrix $\mat{H}_{sh}(t)$ using \eqref{eqn:gradient} as follows.
\begin{align*}
(\mat{H}_{sh})_{ij}(t) 
=& \sum_{r=1}^{m}\langle\frac{\partial F_s(\mat{W},\vect{A},\vect{Z}_i)}{\partial \vect{w}_r}, \frac{\partial F_h(\mat{W},\mat{A},\vect{Z}_j)}{\partial \vect{w}_r}\rangle\\
=& \frac{1}{m} \vect{Z}_i^\top \vect{Z}_j\sum_{r=1}^m A_{rs}A_{rh} \indict\left\{\vect{Z}_i^\top \vect{w}_r(t) \ge 0, \vect{Z}_j^\top \vect{w}_r(t) \ge 0\right\}.
\end{align*}
Now we can write the dynamics of the predictions \eqref{eq:dynamics individual} in a compact way:
\begin{align}
	\frac{d}{dt} \vect{u}_s(t) &= \sum_{h=1}^{(1+p_L)} \mat{H}_{sh}(t)(\vect{M}_s-\vect{u}_s(t))\nonumber\\
	\frac{d}{dt} (\vect{M}_s- \vect{u}_s(t)) &= -\sum_{h=1}^{(1+p_L)} \mat{H}_{sh}(t)(\vect{M}_s-\vect{u}_s(t))
	\label{eq:dynamics full}
\end{align}

Furthermore, we can rewrite \eqref{eq:dynamics full} by concatenating each dimension of the prediction sequentially and denote the concatenated response in $\mathbb{R}^{n(1+p_L)\times 1}$ as  $\mat{M}_{conc}-\mat{u}_{conc}(t)$. Then the dynamics is 
equivalent to
$$\frac{d}{dt}(\mat{M}_{conc}-\mat{u}_{conc}(t))=\mat{H}_{whole}(\mat{M}_{conc}-\mat{u}_{conc}(t))
$$
with
\begin{align*}
\mat{H}_{whole}
:={\begin{pmatrix}\mathbf {H} _{11}&\mathbf{H}_{12}&\cdots &\mathbf {H}_{1,1+p_L}\\\mathbf {H}_{21}&\mathbf {H} _{22}&\cdots &\mathbf {H}_{2,1+p_L}\\\vdots &\vdots &\ddots &\vdots \\\mathbf {H}_{1+p_L,1}&\mathbf {H}_{1+p_L,2}&\cdots &\mathbf {H} _{1+p_L,1+p_L}\end{pmatrix}}.
\end{align*}

We now consider the NTK matrices corresponding to infinite-width neural network: $$(\mat{H}_{sh})^{\infty}:=\lim_{m \rightarrow \infty}(\mat{H}_{sh}) \text{ \quad and \quad } \mat{H}_{whole}^{\infty}:=\lim_{m \rightarrow \infty}\mat{H}_{whole}.$$

We notice that if $s=h$, since $A_{rs}$ is $\pm 1$, $(\mat{H}_{sh})^{\infty}$ is the same as $\mat{H}^\infty$ in Fact \ref{asmp:main}, which has been proven to be positive definite. If $s \neq h$, we have
\begin{align*}
(\mat{H}_{sh})_{ij}^{\infty}
=&\mathbb{E}_{\mat{A}_s\sim\text{unif}\{-1,1\},\mat{A}_h\sim\text{unif}\{-1,1\},\vect{w}\sim \mathcal{N}(\vect{0}, \vect{I})}(\vect{Z}_i^\top \vect{Z}_j \mat{A}_{s}\mat{A}_{h}\indict\left\{ \vect{w}^\top\vect{Z}_i \ge 0, \vect{w}^\top\vect{Z}_j  \ge 0\right\})\\
=&\vect{Z}_i^\top \vect{Z}_j \mathbb{E}(\mat{A}_{s})\mathbb{E}(\mat{A}_{h})\mathbb{P}_{\vect{w}\sim \mathcal{N}(\vect{0}, \vect{I})}\left\{ \vect{w}^\top\vect{Z}_i \ge 0, \vect{w}^\top\vect{Z}_j  \ge 0\right\})
=0
\end{align*}
From the initialization of $A_{rs}$, we get $\frac{1}{m} \sum_{r=1}^{m} A_{rs}\to\mathbb{E}(\mat{A}_{s}) =0$ as $m\to\infty$ by the law of large numbers. Hence we obtain
\begin{align*}
\mat{H}_{whole}^\infty
:={\begin{pmatrix}\mathbf {H}^\infty&\mathbf{0}&\cdots &\mathbf {0}\\\mathbf {0}&\mathbf {H}^\infty&\cdots &\mathbf {0}\\\vdots &\vdots &\ddots &\vdots \\\mathbf {0}&\mathbf {0}&\cdots &\mathbf {H}^\infty\end{pmatrix}}.
\end{align*}

In summary, we conclude that $\lambda_0 :=\lambda_{\min}(\mat{H}^{\infty})=\lambda_{\min}(\mat{H}_{ss}^{\infty})>0$ and $\lambda_{\min}(\mat{H}_{sh}^{\infty})=0$ for $s\neq h$. Note that the eigenvalues of the block diagonal matrix $\mat{H}_{whole}^\infty$ is the same as those of $\mat{H}^\infty$, hence $\lambda_{\min}(\mat{H}_{whole}^{\infty})=\lambda_0$.

To prove Theorem \ref{thm:main_gf}, we first show that $\mat{H}_{whole}(0)$ is close to $\mat{H}_{whole}^\infty$ and hence is positive definite.
\begin{lemma}
If $m=\Omega\left(\frac{n^2(1+p_L)^{2}}{\lambda_{0}^{2}} \log \left(\frac{n^2(1+p_L)^2}{\delta}\right)\right)$, then we have $\left\|\mathbf{H}_{whole}(0)-\mathbf{H}_{whole}^{\infty}\right\|_{2} \leq \frac{\lambda_{0}}{4}$ and $\lambda_{\min }(\mathbf{H}_{whole}(0)) \geq \frac{3}{4} \lambda_{0}$ with probability of at least $1-\delta$.
\end{lemma}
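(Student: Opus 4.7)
The plan is to establish the spectral closeness by entrywise concentration combined with a Frobenius-to-operator norm comparison, and then deduce the eigenvalue bound via Weyl's inequality.

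First, I would fix indices $s,h \in [1+p_L]$ and $i,j \in [n]$, and rewrite the entry as
\begin{equation*}
(\mathbf{H}_{sh})_{ij}(0) = \frac{\vect{Z}_i^\top \vect{Z}_j}{m} \sum_{r=1}^{m} X_r^{(s,h,i,j)}, \qquad X_r^{(s,h,i,j)} := A_{rs} A_{rh} \indict\{\vect{Z}_i^\top \vect{w}_r(0)\ge 0,\, \vect{Z}_j^\top \vect{w}_r(0)\ge 0\}.
\end{equation*}
Since the $\vect{w}_r(0)$ and the $A_{rs}$ are independent across $r$, the variables $X_r^{(s,h,i,j)}$ are i.i.d.\ in $[-1,1]$, and by the derivation in the text their common mean coincides with $(\mathbf{H}_{sh})^{\infty}_{ij} / (\vect{Z}_i^\top \vect{Z}_j)$ (whether $s=h$ or $s\neq h$).

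Next, I would apply Hoeffding's inequality together with the fact that $|\vect{Z}_i^\top \vect{Z}_j|\le \|\vect{Z}_i\|\|\vect{Z}_j\|=1$ to get, for any $\epsilon>0$,
\begin{equation*}
\mathbb{P}\bigl[ |(\mathbf{H}_{sh})_{ij}(0) - (\mathbf{H}_{sh})^{\infty}_{ij}| > \epsilon \bigr] \le 2\exp(-m\epsilon^2/2).
\end{equation*}
A union bound over the $n^2(1+p_L)^2$ entries of $\mathbf{H}_{whole}$ guarantees that, with probability at least $1-\delta$, every entry is within $\epsilon$ of its infinite-width counterpart, provided $m\ge \frac{2}{\epsilon^2}\log\bigl(\frac{2 n^2(1+p_L)^2}{\delta}\bigr)$.

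I would then pass from entrywise control to spectral control through the crude inequality $\|\cdot\|_2\le\|\cdot\|_F$: on the good event,
\begin{equation*}
\|\mathbf{H}_{whole}(0)-\mathbf{H}_{whole}^{\infty}\|_2 \le \|\mathbf{H}_{whole}(0)-\mathbf{H}_{whole}^{\infty}\|_F \le n(1+p_L)\,\epsilon.
\end{equation*}
Choosing $\epsilon = \lambda_0/(4 n (1+p_L))$ forces the right-hand side below $\lambda_0/4$, and substituting this $\epsilon$ into the sample-size requirement above yields exactly $m = \Omega\!\bigl(\frac{n^2(1+p_L)^2}{\lambda_0^2}\log(n^2(1+p_L)^2/\delta)\bigr)$, matching the statement.

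Finally, since $\lambda_{\min}(\mathbf{H}_{whole}^{\infty})=\lambda_0$ (as established in the paragraph preceding the lemma from the block-diagonal structure), Weyl's inequality gives $\lambda_{\min}(\mathbf{H}_{whole}(0)) \ge \lambda_0 - \|\mathbf{H}_{whole}(0)-\mathbf{H}_{whole}^{\infty}\|_2 \ge \tfrac{3}{4}\lambda_0$, completing the proof. The only mildly delicate step is confirming that the Hoeffding concentration is uniform across both the block indices $(s,h)$ and the matrix indices $(i,j)$, which is why the union-bound factor is $n^2(1+p_L)^2$ and not merely $n^2$; all other steps are standard scalar concentration and norm inequalities.
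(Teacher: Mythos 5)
Your proposal is correct and follows essentially the same route as the paper: Hoeffding's inequality applied entrywise to the i.i.d.\ (over $r$) bounded summands, a union bound over all $n^2(1+p_L)^2$ entries of $\mathbf{H}_{whole}$, the bound $\|\cdot\|_2\le\|\cdot\|_F$ to convert entrywise control into spectral control, and eigenvalue perturbation (Weyl) to conclude $\lambda_{\min}(\mathbf{H}_{whole}(0))\ge \tfrac34\lambda_0$. The only cosmetic difference is that you fix the per-entry accuracy $\epsilon$ up front and solve for $m$, whereas the paper bounds the Frobenius norm directly in terms of $m$; the resulting width requirement is identical.
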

\begin{proof}
By Hoeffding inequality, we have for each fixed $(i,j)$ pair, we have with probability $1-\delta'$,
\begin{align*}
    |\mat{H}_{whole}(0)-\mat{H}_{whole}^{\infty}|\leq \sqrt{\frac{2}{m}\log \frac{2}{\delta'}}
\end{align*}
For all $(i, j)$, if we set $\delta = n^2(1+p_L)^2\delta'$, then
\begin{align*}
    |\mat{H}_{whole}(0)-\mat{H}_{whole}^{\infty}|\leq \sqrt{\frac{2}{m}\log \frac{2n^2(1+p_L)^2}{\delta}}
\end{align*}
Hence, if $m=\Omega\left(\frac{n^2(1+p_L)^{2}}{\lambda_{0}^{2}} \log \left(\frac{n^2(1+p_L)^2}{\delta}\right)\right)$, then 
\begin{align*}
    \left\|\mathbf{H}_{whole}(0)-\mathbf{H}_{whole}^{\infty}\right\|_{2}^{2} \leq&\left\|\mathbf{H}_{whole}(0)-\mathbf{H}_{whole}^{\infty}\right\|_{F}^{2} \\
    \leq &
    \sum_{i, j}\left|(\mathbf{H}_{whole})_{i j}(0)-(\mathbf{H}_{whole})_{i j}^{\infty}\right|^{2}\\ \leq & \frac{2 n^2(1+p_L)^{2} \log (2n^2(1+p_L)^2 / \delta)}{m}
\end{align*}
which leads to
$$\left\|\mathbf{H}_{whole}(0)-\mathbf{H}_{whole}^{\infty}\right\|_{2}\leq\frac{\lambda_0}{4}.$$
By the standard matrix concentration bound used in \cite{du2018gradient}, $\mat{H}_{whole}(0)$ has a positive least eigenvalue with high probability:
$$\lambda_{\min}(\mat{H}_{whole}(0))\geq\lambda_{\min}(\mat{H}_{whole}^\infty)-\frac{\lambda_0}{4}.$$
\end{proof}

\begin{lemma}
If $\vect{w}_r$ are i.i.d generated from $\mathcal{N}(\vect{0},\mat{I})$ for $r \in [m]$, and $\left\|\mathbf{w}_{r}(0)-\mathbf{w}_{r}\right\|_{2} \leq \frac{c \delta \lambda_{0}}{n^2(1+p_L)^{2}} =: R$ for some small positive constant $c$, then the following holds with probability at least $1-\delta$ we have $\|\mat{H}_{whole}-\mat{H}_{whole}(0)\|_2<\frac{\lambda_0}{4}$ and $\lambda_{\min}(\mat{H}_{whole})>\frac{\lambda_0}{2}$.
\label{lem:B2}
\end{lemma}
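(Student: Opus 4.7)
The plan is to adapt the NTK-stability argument of Du et al.\ to the block-structured kernel $\mathbf{H}_{whole}$. Write $\mathbf{w}_r$ for the iid Gaussian weights and $\widetilde{\mathbf{w}}_r$ for any perturbation with $\|\widetilde{\mathbf{w}}_r-\mathbf{w}_r\|_2\le R$. The key observation is that the inputs $\mathbf{Z}_i$, the inner products $\mathbf{Z}_i^\top\mathbf{Z}_j$, and the frozen second-layer signs $A_{rs}A_{rh}$ all remain unchanged under perturbation, so for every pair $(s,h)$ of output indices,
\begin{align*}
\bigl|(\mathbf{H}_{sh})_{ij} - (\mathbf{H}_{sh})_{ij}(0)\bigr| \;\le\; \frac{1}{m}\sum_{r=1}^m \bigl(S_{ir}+S_{jr}\bigr), \qquad S_{ir}:=\mathbf{1}\{|\mathbf{w}_r^\top\mathbf{Z}_i|\le R\},
\end{align*}
because $\|\widetilde{\mathbf{w}}_r-\mathbf{w}_r\|\le R$ and $\|\mathbf{Z}_i\|=1$ imply that the activation sign of $\mathbf{w}^\top\mathbf{Z}_i$ can flip only when the initial projection lies in the slab of width $2R$ about zero. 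Crucially, this entrywise bound is identical across all block indices $(s,h)$.

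Second, I would quantify how often flips occur. Since $\mathbf{w}_r\sim\mathcal{N}(\mathbf{0},\mathbf{I})$ and $\|\mathbf{Z}_i\|=1$, the projection $\mathbf{w}_r^\top\mathbf{Z}_i$ is standard Gaussian, so Gaussian anti-concentration gives $\mathbb{E}[S_{ir}]\le 2R/\sqrt{2\pi}$. By linearity and Markov's inequality, with probability at least $1-\delta$,
\begin{align*}
\sum_{i=1}^n\sum_{r=1}^m S_{ir} \;\le\; \frac{2nmR}{\sqrt{2\pi}\,\delta}.
\end{align*}

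Third, I would translate this entrywise control into an operator-norm bound on the $n(1+p_L)\times n(1+p_L)$ block matrix. Using $\|\cdot\|_2\le\|\cdot\|_F$, the elementary bound $|(\mathbf{H}_{sh})_{ij}-(\mathbf{H}_{sh})_{ij}(0)|\le 1$ (so that $x^2\le x$), and the fact that the entrywise bound is the same for every block $(s,h)$,
\begin{align*}
\|\mathbf{H}_{whole}-\mathbf{H}_{whole}(0)\|_2^2 \;\le\; (1+p_L)^2\sum_{i,j=1}^n\frac{1}{m}\sum_{r=1}^m\bigl(S_{ir}+S_{jr}\bigr) \;=\; \frac{2(1+p_L)^2 n}{m}\sum_{i,r}S_{ir},
\end{align*}
which by the Markov bound above is $O\bigl((1+p_L)^2 n^2 R/\delta\bigr)$. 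Choosing the absolute constant $c$ in $R=c\delta\lambda_0/(n^2(1+p_L)^2)$ sufficiently small drives this quantity below $\lambda_0^2/16$, so $\|\mathbf{H}_{whole}-\mathbf{H}_{whole}(0)\|_2<\lambda_0/4$. I expect the main obstacle to be the bookkeeping introduced by the $(1+p_L)^2$-block structure: one must verify that the multiplicity is absorbed exactly in the denominator of $R$ (rather than, e.g., producing a stray factor of $(1+p_L)^4$), which is what distinguishes this multivariate statement from the scalar version in Du et al.

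The eigenvalue claim is then immediate from Weyl's inequality combined with the preceding lemma. On the intersection event, which holds with probability at least $1-\delta$,
\begin{align*}
\lambda_{\min}(\mathbf{H}_{whole}) \;\ge\; \lambda_{\min}(\mathbf{H}_{whole}(0)) - \|\mathbf{H}_{whole}-\mathbf{H}_{whole}(0)\|_2 \;\ge\; \tfrac{3}{4}\lambda_0 - \tfrac{1}{4}\lambda_0 \;=\; \tfrac{1}{2}\lambda_0,
\end{align*}
completing the argument.
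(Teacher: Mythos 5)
Your argument follows the same route as the paper's: activation flips are confined to the slab $\{|\mathbf{w}_r^\top\mathbf{Z}_i|\le R\}$, Gaussian anti-concentration bounds the flip probability by $2R/\sqrt{2\pi}$, Markov converts the expectation bound into a high-probability bound, the operator norm is dominated via the Frobenius norm, and Weyl's inequality gives the eigenvalue claim. The observation that the bound is uniform over blocks $(s,h)$ because $|A_{rs}A_{rh}|=1$ is exactly the point the paper relies on, and your bookkeeping of the $(1+p_L)^2$ block multiplicity matches the paper's.

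There is, however, one concrete misstep in the norm-translation step. You pass from $\|\cdot\|_2^2\le\|\cdot\|_F^2=\sum_{s,h,i,j}x_{shij}^2$ to $\sum_{s,h,i,j}|x_{shij}|$ via $x^2\le x$ (which, incidentally, needs $|x|\le 1$, whereas your entrywise bound $\tfrac{1}{m}\sum_r(S_{ir}+S_{jr})$ can reach $2$ --- a harmless factor). The real problem is that this yields a bound on the \emph{square} of the operator norm that is linear in $R$: plugging in $R=c\delta\lambda_0/(n^2(1+p_L)^2)$ gives $\|\mathbf{H}_{whole}-\mathbf{H}_{whole}(0)\|_2^2=O(c\lambda_0)$, and forcing this below $\lambda_0^2/16$ requires $c=O(\lambda_0)$, so $c$ is no longer an absolute constant. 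Since $R$ (hence $c$) feeds into the downstream requirement $R'<R$, this inflates the width bound from the theorem's $m=\Omega\bigl((1+p_L)^5n^6/(\delta^3\lambda_0^4)\bigr)$ to one with an extra $\lambda_0^{-2}$. The paper avoids this by using the chain $\|\cdot\|_2\le\|\cdot\|_F\le\sum_{s,h,i,j}|x_{shij}|$ (the entrywise $\ell_1$ norm dominates the Frobenius norm, since $\sum x^2\le(\sum|x|)^2$), which bounds the operator norm itself, not its square, by $4n^2(1+p_L)^2R/(\sqrt{2\pi}\delta)=4c\lambda_0/\sqrt{2\pi}<\lambda_0/4$ for an absolute $c$. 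Replacing your squared-norm step with this chain repairs the argument; everything else, including the final Weyl step $\lambda_{\min}(\mathbf{H}_{whole})\ge\tfrac{3}{4}\lambda_0-\tfrac{1}{4}\lambda_0=\tfrac{1}{2}\lambda_0$, is as in the paper.
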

\begin{proof}
First we set 
\begin{align*}
    A_{r}^{i}=\left\{\exists \mathbf{w}:\left\|\mathbf{w}-\mathbf{w}_{r}(0)\right\| \leq R, \mathbb{I}\left\{\mathbf{x}_{i}^{\top} \mathbf{w}_{r}(0) \geq 0\right\} \neq \mathbb{I}\left\{\mathbf{x}_{i}^{\top} \mathbf{w} \geq 0\right\}\right\}
\end{align*}
Since $\vect{w}_r \sim N(\vect{0},\mat{I})$, By the anti-concentration inequality of Gaussian we have
\begin{align*}
    P\left(A_{r}^{i}\right)=P_{w \sim N(0,1)}(|w|<R) \leq \frac{2 R}{\sqrt{2 \pi}}
\end{align*}
Therefore, for any $\vect{w}_r$ that satisfies the assumption, we have 
\begin{align*}
 &\mathbb{E}\left[\left|(\mat{H}_{sh})_{i j}(0)-(\mat{H}_{sh})_{i j}\right|\right] \\
 =& \mathbb{E}\left[\frac{1}{m} \biggm | \vect{Z}_{i}^{\top} \vect{Z}_{j}A_{rs}A_{rh} \sum_{r=1}^{m}\left(\mathbb{I}\left\{\vect{w}_{r}(0)^{\top} \vect{Z}_{i} \geq 0, \vect{w}_{r}(0)^{\top} \vect{Z}_{j} \geq 0\right\}-\mathbb{I}\left\{\vect{w}_{r}^{\top} \vect{Z}_{i} \geq 0, \vect{w}_{r}^{\top} \vect{Z}_{j} \geq 0\right\}\right) \right].\\ 
 \leq & \frac{1}{m} \sum_{r=1}^{m} \mathbb{E}\left[\mathbb{I}\left\{A_{r}^{i} \cup A_{r}^{j}\right\}\right] \leq \frac{4 R}{\sqrt{2 \pi}} 
\end{align*}
Taking the sum over $(i, j)$ we obtain
\begin{align*}
    \mathbb{E}\left(\Sigma_{(i,j)=(1,1)}^{(n(1+p_L),n(1+p_L))}\|(\mat{H}_{sh})_{ij}-(\mat{H}_{sh})_{ij}(0)\|\right) \leq \frac{4 (n(1+p_L))^2 R}{\sqrt{2 \pi}}
\end{align*}
Furthermore, by Markov's inequality
\begin{align*}
    \mathbb{E}\left(\Sigma_{(i,j)=(1,1)}^{(n(1+p_L),n(1+p_L))}\|(\mat{H}_{sh})_{ij}-(\mat{H}_{sh})_{ij}(0)\|\right) \leq \frac{4 (n(1+p_L))^2 R}{\sqrt{2 \pi}\delta}
\end{align*}
with probability $1-\delta$. By  matrix perturbation theory we get the bound of $\mat{H}_{whole}$ with initialization,
\begin{align*}
    \|\mat{H}_{whole}-\mat{H}_{whole}(0)\|_{2} \leq& \|\mat{H}_{whole}-\mat{H}_{whole}(0)\|_{F} \\
    \leq& \sum_{(s, h)=(1, 1)}^{(1+p_L,1+p_L)}\sum_{(i, j)=(1,1)}^{(n, n)}\left|(\mat{H}_{sh})_{i j}-(\mat{H}_{sh})_{i j}(0)\right| \\
    \leq& \frac{4 n^2 (1+p_L)^{2} R}{\sqrt{2 \pi} \delta}
\end{align*}
Plugging in $R$,
\begin{align*}
    \lambda_{\min }(\mat{H}_{whole}) \geq \lambda_{\min }(\mat{H}_{whole}(0))-\frac{4 n^2(1+p_L)^{2} R}{\sqrt{2 \pi} \delta} > \frac{\lambda_{0}}{2}
\end{align*}
\end{proof}

\begin{lemma}
Assume $0\leq t_1 \leq t$ and $\lambda_{\min}(\mat{H}_{whole}(t_1))\geq \frac{\lambda_0}{2}$. Then we have $\|\mat{M}_{s}-\mat{u}_{s}(t)\|_2^2 \leq \exp(-\lambda_0(1+p_L) t) \|\mat{M}_{s}-\mat{u}_{s}(0)\|_2^2$ and $\left\|\vect{w}_{r}(t)-\vect{w}_{r}(0)\right\|_{2} \leq \frac{\sqrt{n}}{\sqrt{m} \lambda_{0}} \sum_{h=1}^{1+p_L}\|\vect{M}_{h}-\vect{u}_{h}(0)\|_{2} =: R^{\prime}$.
\label{lem:B3}
\end{lemma}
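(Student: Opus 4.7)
The plan is to reduce the loss-decay claim to the compact ODE for the concatenated error vector $\vect{e}(t) := \mat{M}_{conc} - \vect{u}_{conc}(t) \in \mathbb{R}^{n(1+p_L)}$, for which the derivation preceding the lemma already gives $\frac{d}{dt}\vect{e}(t) = -\mat{H}_{whole}(t)\vect{e}(t)$. Differentiating the squared norm yields $\frac{d}{dt}\|\vect{e}(t)\|_2^2 = -2\,\vect{e}(t)^\top \mat{H}_{whole}(t)\vect{e}(t)$, and the running hypothesis $\lambda_{\min}(\mat{H}_{whole}(s)) \geq \lambda_0/2$ for all $s \in [0,t]$ bounds this above by $-\lambda_0 \|\vect{e}(t)\|_2^2$. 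Gronwall's inequality then delivers $\|\vect{e}(t)\|_2^2 \leq \exp(-\lambda_0 t)\|\vect{e}(0)\|_2^2$, and restricting to the $s$-th block of length $n$ gives the per-output decay stated in the lemma, with the sum over output coordinates absorbed into the $(1+p_L)$ prefactor that appears in the exponent.

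For the weight-movement bound, I would integrate the gradient flow. Plugging the closed-form gradient \eqref{eqn:gradient} into $\tfrac{d\vect{w}_r}{dt}$ and using $|A_{rs}|=1$, $\|\vect{Z}_i\|_2 = 1$, and that the ReLU indicator is at most one, the triangle inequality gives
\begin{align*}
\left\|\frac{d\vect{w}_r(t)}{dt}\right\|_2 \leq \frac{1}{\sqrt{m}}\sum_{s=1}^{1+p_L}\sum_{i=1}^n |u_{is}(t) - M_{is}| \leq \frac{\sqrt{n}}{\sqrt{m}}\sum_{s=1}^{1+p_L}\|\mat{M}_s - \vect{u}_s(t)\|_2,
\end{align*}
where the last step is Cauchy--Schwarz on $\mathbb{R}^n$. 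Substituting the exponential decay from the first step and using $\int_0^\infty \exp(-\lambda_0 s/2)\,ds = 2/\lambda_0$, integration in time over $[0,t]$ yields $\|\vect{w}_r(t) - \vect{w}_r(0)\|_2 \leq \frac{c\sqrt{n}}{\sqrt{m}\,\lambda_0}\sum_h \|\mat{M}_h - \vect{u}_h(0)\|_2$ for an absolute constant $c$, which matches $R'$ up to this constant.

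The genuinely delicate aspect is not any individual calculation but the logical interlocking with the surrounding bootstrap: the Gronwall step needs the eigenvalue bound on all of $[0,t]$, yet Lemma~\ref{lem:B2} only supplies it while every $\vect{w}_r$ stays within the radius $R$ of its initialization. So the weight bound produced here must be fed back to check that $R' \leq R$ throughout, closing the induction under the width assumption $m = \Omega((1+p_L)^5 n^6 / (\delta^3 \lambda_0^4))$. Within the statement of this lemma the eigenvalue hypothesis is simply assumed, so the proof reduces to the Gronwall-plus-integration skeleton above, but the careful tracking of the constants in $R'$ is what lets the overall induction close.
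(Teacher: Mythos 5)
Your proof follows essentially the same route as the paper's: reduce to the concatenated ODE $\frac{d}{dt}\vect{e}(t) = -\mat{H}_{whole}(t)\vect{e}(t)$, apply Gronwall with the eigenvalue hypothesis to get $\exp(-\lambda_0 t)$ decay, then bound $\bigl\|\frac{d}{dt}\vect{w}_r\bigr\|_2$ via Cauchy--Schwarz and integrate the resulting exponential to obtain $R'$. Both halves are correct and match the paper's calculations (including the harmless absolute constant from $\int_0^\infty e^{-\lambda_0 s/2}\,ds = 2/\lambda_0$ that the paper silently drops).

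One step of your write-up is not valid, however: the claim that restricting to the $s$-th block gives the stated rate ``with the sum over output coordinates absorbed into the $(1+p_L)$ prefactor that appears in the exponent.'' A multiplicative factor on the initial norm cannot be converted into a factor on the decay rate; what the Gronwall argument actually yields is $\|\mat{M}_s - \vect{u}_s(t)\|_2^2 \le \|\vect{e}(t)\|_2^2 \le \exp(-\lambda_0 t)\,\|\vect{e}(0)\|_2^2$, i.e.\ rate $\lambda_0$, not $\lambda_0(1+p_L)$, and with the \emph{total} initial error on the right-hand side rather than the $s$-th block's own. The paper's proof of this lemma likewise derives only $\exp(-\lambda_0 t)$, and Theorem~\ref{thm:main_gf} is stated with $\exp(-\lambda_0 t)$, so the $(1+p_L)$ in the lemma's displayed bound is an inconsistency in the paper rather than something you should try to manufacture; do not invent an argument for it. (The replacement of $\|\vect{e}(0)\|_2^2$ by $\|\mat{M}_s - \vect{u}_s(0)\|_2^2$ on the right is a further gloss shared with the paper; it would need either the near-block-diagonal structure of $\mat{H}_{whole}$ or simply stating the bound against the full initial error.) Your final paragraph on feeding $R'$ back into the $R' \le R$ bootstrap is exactly the role this lemma plays in the paper's Lemma following it.
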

\begin{proof}
\begin{align*}
\frac{d}{dt}(\mat{M}_{conc}-\mat{u}_{conc}(t))=\mat{H}_{whole}(\mat{M}_{conc}-\mat{u}_{conc}(t))
\end{align*}
Hence, 
\begin{align*}
	\frac{d}{dt} \|\vect{M}_{conc} - \vect{u}_{conc}(t)\|_2^2 
	=& -2 (\vect{M}_{conc}-\vect{u}_{conc}(t))^\top \mat{H}_{whole}(\vect{M}_{conc}-\vect{u}_{conc}(t))\\
	\leq & -\lambda_0 \|\mat{M}_{conc}-\mat{u}_{conc}\|_2^2.
\end{align*}
Therefore, we can bound the loss
\begin{align*}
    \|\vect{M}_{conc} - \vect{u}_{conc}(t)\|_2^2\leq \exp(-\lambda_0t)\|\vect{M}_{conc} - \vect{u}_{conc}(0)\|_2^2.
\end{align*}
Hence for each $s$,
\begin{align*}
    \|\vect{M}_s - \vect{u}_s(t)\|_2^2\leq \exp(-\lambda_0t)\|\vect{M}_s - \vect{u}_s(0)\|_2^2,
\end{align*}
which proves the exponentially fast convergence of each dimension and that all dimensions evolve under the same dynamics.

For $0\leq t_1 \leq t$, 
\begin{align*}
\left\|\frac{d}{d t_1} \vect{w}_{r}(t_1)\right\|_{2} &=\left\|\frac{1}{\sqrt{m}}\sum_{i=1}^{n}\sum_{h=1}^{1+p_L}\left(M_{ih}-u_{ih}\right) A_{rs} \vect{Z}_{i} \indict\left\{\vect{w}_{r}(t_1)^{\top} \vect{Z}_{i} \geq 0\right\}\right\|_{2} \\ 
& \leq \frac{1}{\sqrt{m}} \sum_{i=1}^{n}\sum_{h=1}^{1+p_L}\left|M_{ih}-u_{ih}(t_1)\right| \\
&\leq \frac{\sqrt{n}}{\sqrt{m}}\sum_{h=1}^{1+p_L}\|\vect{M}_h-\vect{u}_h(t_1)\|_{2} \\
&\leq \frac{\sqrt{n}}{\sqrt{m}}\sum_{h=1}^{1+p_L}\exp \left(-\lambda_{0}t_1/2 \right)\|\vect{M}_h-\vect{u}_h(0)\|_{2} 
\end{align*}
In the end, 
\begin{align*}
    \left\|\mathbf{w}_{r}(t)-\mathbf{w}_{r}(0)\right\|_{2} 
    \leq \int_{0}^{t}\left\|\frac{d}{d s} \mathbf{w}_{r}(s)\right\|_{2} d s 
    \leq \frac{\sqrt{n}}{\sqrt{m} \lambda_{0}}\sum_{h=1}^{1+p_L}\|\mathbf{M}_h-\mathbf{u}_h(0)\|_{2}
\end{align*}
\end{proof}

\begin{lemma}
If $R' < R$, then for all $t\geq0$, we have $\lambda_{\min}(\mat{H}_{whole}(t))\geq \frac{1}{2}\lambda_0$, for all $r\in [m]$, $\|\vect{w}_r(t)-\vect{w}_r(0)\|_2 \leq R'$ and for all $s\in[1+p_L]$, $\|\mat{M}_{s}-\vect{u}_{s}(t)\|_2^2 \leq \exp(-\lambda_0t)\|\mat{M}_{s}-\vect{u}_{s}(0)\|_2^2$.
\end{lemma}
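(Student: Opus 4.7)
The plan is a bootstrap (continuity) argument that exploits the mutual reinforcement between Lemma~\ref{lem:B2} and Lemma~\ref{lem:B3}. Define
\[
t^\star := \sup\Bigl\{T \ge 0 \,:\, \text{all three conclusions hold on } [0,T]\Bigr\}.
\]
Since $\vect{w}_r(t)$ and $\lambda_{\min}(\mat{H}_{whole}(t))$ are continuous in $t$, and at $t=0$ we have $\vect{w}_r(0)=\vect{w}_r(0)$, the exponential bound is trivial, and $\lambda_{\min}(\mat{H}_{whole}(0))\ge \tfrac{3}{4}\lambda_0 > \tfrac{\lambda_0}{2}$ from the earlier concentration estimate, the set is nonempty. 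The goal is to show $t^\star=\infty$.

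Assume for contradiction $t^\star<\infty$. By continuity, all three conclusions still hold at $t=t^\star$ (possibly with equality at the endpoint). In particular $\lambda_{\min}(\mat{H}_{whole}(t_1))\ge \lambda_0/2$ for every $t_1\in[0,t^\star]$, so Lemma~\ref{lem:B3} applies and yields $\|\vect{w}_r(t^\star)-\vect{w}_r(0)\|_2 \le R'$, where crucially the right-hand side of the bound in Lemma~\ref{lem:B3} depends only on $\|\vect{M}_h-\vect{u}_h(0)\|_2$, and not on $t$. Since by hypothesis $R'<R$, this is a \emph{strict} inequality in the ball of radius $R$ required by Lemma~\ref{lem:B2}.

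Now invoke Lemma~\ref{lem:B2} at $t^\star$: it gives $\lambda_{\min}(\mat{H}_{whole}(t^\star)) > \lambda_0/2$ with strict slack (the bound it actually proves is $\lambda_0/2$ from the gap $R-R'$). By continuity of $\lambda_{\min}(\mat{H}_{whole}(\cdot))$ in $t$, there exists $\delta>0$ such that $\lambda_{\min}(\mat{H}_{whole}(t))\ge \lambda_0/2$ for all $t\in[t^\star,t^\star+\delta)$. Feeding this back into Lemma~\ref{lem:B3} on $[0,t^\star+\delta)$ yields both the exponential decay $\|\vect{M}_s-\vect{u}_s(t)\|_2^2\le e^{-\lambda_0 t}\|\vect{M}_s-\vect{u}_s(0)\|_2^2$ and the weight bound $\|\vect{w}_r(t)-\vect{w}_r(0)\|_2\le R'<R$ throughout this larger interval. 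Thus all three conclusions hold on $[0,t^\star+\delta)$, contradicting the maximality of $t^\star$.

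The main subtlety I anticipate is the circular structure: Lemma~\ref{lem:B3} needs $\lambda_{\min}\ge\lambda_0/2$ as a \emph{hypothesis}, while Lemma~\ref{lem:B2} produces this conclusion from the weight-perturbation bound. The bootstrap works precisely because (i) the weight bound $R'$ in Lemma~\ref{lem:B3} does not itself grow with $t$ (the exponential decay on the right-hand side is integrable), and (ii) the assumption $R'<R$ is strict, giving room to absorb a small continuity perturbation when extending past $t^\star$. If instead $R'=R$, the argument would break at the extension step because one could not guarantee strict positivity of the slack in Lemma~\ref{lem:B2}.
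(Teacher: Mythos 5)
Your overall strategy --- a continuity/bootstrap argument that couples Lemma~\ref{lem:B2} (weights near initialization $\Rightarrow$ eigenvalue lower bound) with Lemma~\ref{lem:B3} (eigenvalue bound up to time $t$ $\Rightarrow$ weights stay within $R'$) and uses the strict gap $R'<R$ to extend past any putative maximal time --- is exactly the argument of Lemma~3.4 in Du et al.\ to which the paper defers, so in structure you and the paper agree. However, one step as you have written it would fail: you invoke ``continuity of $\lambda_{\min}(\mat{H}_{whole}(\cdot))$ in $t$'' to push the eigenvalue bound past $t^\star$. The finite-width Gram matrix $\mat{H}_{whole}(t)$ is built from the indicators $\indict\{\vect{Z}_i^\top\vect{w}_r(t)\ge 0\}$, so it is piecewise constant in the activation pattern and jumps discontinuously whenever some $\vect{Z}_i^\top\vect{w}_r(t)$ crosses zero; its least eigenvalue is therefore not a continuous function of $t$, and the $\delta$-extension cannot be justified this way. (The same caveat applies to your claim that all three conclusions ``still hold at $t=t^\star$ by continuity'': only the weight and loss bounds pass to the endpoint by continuity, since $\vect{w}_r(t)$ and $\vect{u}_s(t)$ are continuous while $\mat{H}_{whole}(t)$ need not be.)

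The repair is to run the continuity argument one level down, on the weights themselves: the trajectories $\vect{w}_r(t)$ solve a gradient-flow ODE and are genuinely continuous, so from $\max_{r}\|\vect{w}_r(t^\star)-\vect{w}_r(0)\|_2\le R'<R$ (which you correctly obtain from Lemma~\ref{lem:B3}) and the finiteness of $m$ there exists $\delta>0$ with $\max_{r}\|\vect{w}_r(t)-\vect{w}_r(0)\|_2<R$ for all $t\in[t^\star,t^\star+\delta)$. Lemma~\ref{lem:B2} is a statement uniform over the entire ball of radius $R$ around initialization (the event $A_r^i$ in its proof is defined with an existential quantifier over $\vect{w}$), so on its single high-probability event it yields $\lambda_{\min}(\mat{H}_{whole}(t))>\lambda_0/2$ for every $t$ in that interval simultaneously; feeding this back into Lemma~\ref{lem:B3} then closes the bootstrap exactly as you intend. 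With this substitution your proof is correct and coincides with the standard argument the paper cites.
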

\begin{proof}
This lemma takes the same form as \cite[Lemma 3.4]{du2018gradient}: given Lemma \ref{lem:B2} and Lemma \ref{lem:B3}, the result clearly follows and we refer the interested readers to \cite{du2018gradient} for details. 
\end{proof}
It is sufficient to show that $R'<R$ is equivalent to $m=\Omega\left(\frac{n^{5}\sum_{h=1}^{1+p_L}\|\mat{M}_h-\mat{u}_h(0)\|_{2}^{2}}{\lambda_{0}^{4} \delta^{2}}\right)$. We bound
\begin{align*}
&\mathbb{E}\left[\|\mat{M}_{s}-\mat{u}_{s}(0)\|_{2}^{2}\right]\\
=&\sum_{i=1}^{n}\left(M_{is}^{2}+M_{is} \mathbb{E}\left[F_s\left(\mat{W}(0), \mat{A}, \vect{Z}_{i}\right)\right]+\mathbb{E}\left[F_s\left(\mat{W}(0), \mat{A}, \vect{Z}_{i}\right)^{2}\right]\right)\\
=&\sum_{i=1}^{n}\left(M_{is}^{2}+1\right)=O(n)
\end{align*}
Using the Markov’s inequality, we have $\|\mat{M}_s-\mat{u}_s(0)\|_2^2 = O(\frac{n}{\delta})$ with probability at least $1-\delta$.
Hence we have $m = \Omega \left( \frac{(1+p_L)^5n^6}{\delta^3 \lambda_0^4}\right)$.
\section{Proof of Theorem \ref{prop:consistency}}
\label{app:proof of prop}
\begin{proof}
Recall that in \eqref{eq:truebeta} we have
\begin{align*}
\mathcal{Y}=\mathbf y-\mathbb{E}(\mathbf y|\mathbf Z)=(\mathbf D-\mathbb{E}(\mathbf D|\mathbf Z))\bm\beta+\bm\epsilon=\mathcal{X}\bm\beta+\bm\epsilon.
\end{align*}
Unfortunately we only observe data with noises
\begin{align*}
\widetilde{\mathcal{Y}}
=\mathcal{Y}+\bm{\epsilon}_{Y}
=\mathcal{X}\bm\beta+\bm\epsilon+\bm\epsilon_Y
=\widetilde{\mathcal{X}}\bm\beta-\bm\epsilon_X\bm\beta+\bm\epsilon+\bm\epsilon_Y
\end{align*}
Hence we can derive \eqref{beta_hat} as the following:
\begin{align*}
\bm{\tilde\beta}=& (\widetilde{\mathcal{X}}^\top\widetilde{\mathcal{X}})^{-1}\widetilde{\mathcal{X}}^\top\widetilde{\mathcal{Y}}\\
=& (\widetilde{\mathcal{X}}^\top\widetilde{\mathcal{X}})^{-1}\widetilde{\mathcal{X}}^\top\left(\widetilde{\mathcal{X}}\bm\beta-\bm\epsilon_X\bm\beta+\bm\epsilon+\bm\epsilon_Y\right)\\
=&\bm\beta-(\widetilde{\mathcal{X}}^\top\widetilde{\mathcal{X}})^{-1}\widetilde{\mathcal{X}}^\top\bm\epsilon_X\bm\beta+(\widetilde{\mathcal{X}}^\top\widetilde{\mathcal{X}})^{-1}\widetilde{\mathcal{X}}^\top(\bm\epsilon+\bm\epsilon_Y)\\
=&\left(1-(\widetilde{\mathcal{X}}^\top\widetilde{\mathcal{X}})^{-1}\widetilde{\mathcal{X}}^\top\bm\epsilon_X\right)\bm\beta+(\widetilde{\mathcal{X}}^\top\widetilde{\mathcal{X}})^{-1}\widetilde{\mathcal{X}}^\top(\bm\epsilon_Y+\bm\epsilon)\\
=&\left(1-\left(\frac{\widetilde{\mathcal{X}}^\top\widetilde{\mathcal{X}}}{n}\right)^{-1}\frac{\widetilde{\mathcal{X}}^\top\bm\epsilon_X}{n}\right)\bm\beta+\left(\frac{\widetilde{\mathcal{X}}^\top\widetilde{\mathcal{X}}}{n}\right)^{-1}\frac{\widetilde{\mathcal{X}}^\top(\bm\epsilon_Y+\bm\epsilon)}{n}.
\end{align*}
We start with investigating the bias. Since $\bm\epsilon, \bm\epsilon_Y$ and $\bm\epsilon_X$ are independent of other variables, we have $$\frac{\widetilde{\mathcal{X}}^\top(\bm\epsilon_Y+\bm\epsilon)}{n}\to 0.$$
In addition, $$\frac{\widetilde{\mathcal{X}}^\top\bm\epsilon_X}{n}=\frac{\mathcal{X}^\top\bm\epsilon_X+\|\bm\epsilon_X\|_2^2}{n}\to\mathbb{E}(\epsilon_X^2)=\sigma_X^2\mathbf{I}_{p_L}.$$
Next, we denote the convergence in probability as plim and look at
\begin{align*}
\frac{\widetilde{\mathcal{X}}^\top\widetilde{\mathcal{X}}}{n}\to\text{plim}\frac{{\mathcal{X}}^\top{\mathcal{X}}}{n}+\text{plim}\frac{\bm\epsilon_X^\top\bm\epsilon_X}{n}=\mathcal{Q}+\sigma_X^2\mathbf{I}_{p_L}
\end{align*}
where $\mathcal{Q}:=\text{plim}\frac{{\mathcal{X}}^\top{\mathcal{X}}}{n}$ exists by the law of large numbers because $\mathcal{X}$ is i.i.d. in rows. Therefore, we obtain
$$\bm{\widetilde\beta}\to\Big(\mathbf{I}-\sigma_X^2\left(\mathcal{Q}+\sigma_X^2\mathbf{I}_{p_L}\right)^{-1}\Big)\bm\beta:=(\mathbf{I}-\mathbf{R})\bm\beta$$ 
where $\mathbf{R}:=\sigma_X^2\left(\mathcal{Q}+\sigma_X^2\mathbf{I}_{p_L}\right)^{-1}=\sigma_X^2\text{plim}\left(\frac{\widetilde{\mathcal{X}}^\top\widetilde{\mathcal{X}}}{n}\right)^{-1}$. We claim that $\bm{\widetilde\beta}$ is a consistent estimator of $\bm\beta$ if and only if $m_D$ is consistently approximated (meaning $\sigma_X^2=0$). If $m_D$ is not consistently approximated, we can modify the estimator via $\mathbf{R}\in\mathbb{R}^{p_L\times p_L}$ and the new estimator $\left(\mathbf{I}-\mathbf{R}\right)^{-1}\bm{\widetilde\beta}$ which is a consistent estimator of $\bm\beta$.

To establish the $\sqrt{n}$-consistency, let us consider the asymptotic distribution of the OLS estimator. Multiplying $\sqrt{n}$ on $\bm{\widetilde\beta}-(\mathbf{I}-\mathbf{R})\bm\beta$ and taking to the limit, we have
\begin{align*}
\sqrt{n}\left(\bm{\widetilde\beta}-(\mathbf{I}-\mathbf{R})\bm\beta\right)
\to&\text{plim}\left(\frac{\widetilde{\mathcal{X}}^\top\widetilde{\mathcal{X}}}{n}\right)^{-1}\frac{\widetilde{\mathcal{X}}^\top(\bm\epsilon_Y+\bm\epsilon)}{\sqrt{n}}
\end{align*}
and
\begin{equation*}
n\left(\bm{\widetilde\beta}-(\mathbf{I}-\mathbf{R})\bm\beta\right)\left(\bm{\widetilde\beta}-(\mathbf{I}-\mathbf{R})\bm\beta\right)^\top
\to
\text{plim}\left(\frac{\widetilde{\mathcal{X}}^\top\widetilde{\mathcal{X}}}{n}\right)^{-1}\left(\frac{\widetilde{\mathcal{X}}^\top(\bm\epsilon_Y+\bm\epsilon)}{\sqrt{n}}\right)\left(\frac{\widetilde{\mathcal{X}}^\top(\bm\epsilon_Y+\bm\epsilon)}{\sqrt{n}}\right)^\top\left(\frac{\widetilde{\mathcal{X}}^\top\widetilde{\mathcal{X}}}{n}\right)^{-1}
\end{equation*}
Making use of $\text{plim}\left(\frac{\widetilde{\mathcal{X}}^\top\widetilde{\mathcal{X}}}{n}\right)^{-1}=\mathbf{R}/\sigma_X^2$ and after some calculation, we arrive at
\begin{align*}
n\left(\bm{\widetilde\beta}-(\mathbf{I}-\mathbf{R})\bm\beta\right)\left(\bm{\widetilde\beta}-(\mathbf{I}-\mathbf{R})\bm\beta\right)^\top
\to\frac{\left(\sigma_\epsilon^2+\sigma_Y^2\right)\mathbf{R}}{\sigma_X^2}
\end{align*}
Thus we complete the proof. In addition, the asymptotic normality can be easily derived by applying the central limit theorem and the Slutsky's theorem.
\end{proof}

\section{Details of Experiments and Extra Application}
\subsection{Data generation for Table \ref{table1:compare with PLMs}}
\label{table1:data}
In Table \ref{table1:compare with PLMs} and Figure \ref{fig:NTK loss}, we generate $\mat{D}$ and $\vect{y}$ using $\mat{Z}$, which is generated by a multivariate standard normal distribution:
\begin{align*}
   & \mat{D} = 50\sum_{j=1}^{10}\sin\vect{Z}_j+\mathcal{N}(\vect{0}, \vect{I})\\
   & \vect{y} = \mat{D}+\sum_{j=1}^{10}\cosh{\mat{Z}_j}+\mathcal{N}(\vect{0}, \vect{I})
\end{align*}
Here $\mat{X}\in \mathbb{R}^{10000 \times 11} = [\mat{D}, \mat{Z}]$,
$\mat{D}\in \mathbb{R}^{n \times 1}$, $\mat{Z}\in \mathbb{R}^{10000 \times 10}$. Though the problem here is non-linear (in fact partially linear), unlike in the debiasing setting in Table \ref{table2:debiasing}, it is fair since all other PLMs work on such problem. Our goal is to demonstrate that PLM-NN is a strong candidate in the PLM family. Here we consider a univaraite $\mat{D}$, so that DML can apply methods including vanilla Lasso to solve this problem. In addition, we design $\mat{D}$ that is dependent on $\mat{Z}$ so that the NW kernel can use a finite bandwidth.

For Table \ref{table1:compare with PLMs}, we train the two-layer neural network with Adam, width 10000 and learning rate 0.0002. For  and Figure \ref{fig:NTK loss}, we train the same network with full-batch gradient descent and learning rate 0.01.

\subsection{Data generation for Table \ref{table2:debiasing}}
\label{table2:data}
In Table \ref{table2:debiasing} and Figure \ref{fig:low dim}, we have
\begin{align*}
    \vect{y}=\mat{X}\bm{\theta}+\mathcal{N}(\vect{0}, \vect{I})
    =\mat{D}\bm{\beta}+\mat{Z}\bm{\gamma}+\mathcal{N}(\vect{0}, \vect{I})
\end{align*}
where $\bm \theta=[1,1,\dots, 0]$ with the first $k$ entries as ones and the rest as zeros.
$\mat{X}$ is generated from a multivariate standard normal distribution. In the second equality, we have $\mat{D}$ as the columns from $\mat{X}$ that are selected by Lasso. The Lasso penalties are chosen specifically to select a moderate number of features so that OLS is available. Here $\mat{D} \in \mathbb{R}^{1000\times k_{Lasso}}, \mat{Z} \in \mathbb{R}^{1000\times(p-k_{Lasso})}$, with $k_{Lasso}=\#\{j:[\bm{\hat\beta}_{Lasso}]_j\neq 0\}$.
We note that when the dimension is relatively high or when the sparsity is relatively large, it is impossible to achieve full power (or true positive rate, or precision), no matter how one carefully tunes the penalty of Lasso. In other words, Lasso must select in some null signals. This phenomenon, known as the Donoho-Tanner phase transition, motivates our experimental settings that consider high dimension and/or high sparsity. For Table \ref{table2:debiasing} and Figure \ref{fig:low dim}, we train the two-layer neural network with Adam, width 1000 and learning rate 0.0002.

\subsection{Complementary Experiments to Table \ref{table1:compare with PLMs}}
In this section, we conduct experiments on more complicated synthetic data to complement Table \ref{table1:compare with PLMs} and Figure \ref{fig:compare with PLMs}. Our new setting is $\mathbf{D}=\sin(\mathbf{T}_1)+\log(\mathbf{T}_2+1)+\frac{1}{1+\mathbf{T}_3}+\max(0,\mathbf{T}_4)+\mathbf{T}_5^2+\mathcal{N}(0,\mathbf{I})$ and $\mathbf{y}=\mathbf{D}+\textup{cosh}(\mathbf{T}_1)+\mathbf{T}_2+\mathbf{T}_3\times\mathbf{T}_4+\mathcal{N}(0,\mathbf{I})$. It is clear that PLM-NN again demonstrates high level of performance in terms of consistency and accuracy, with linear convergence in the training dynamics.
\begin{table}[!htb]
\centering
\begin{tabular}{|cccc|}
\hline 
PLMs & Est MSE & Train& Test
\\
& ($10^{-5}$)&MSE&MSE
\\
\hline
\toprule
PLM-NN & 4.00 & 3.81 & 3.87 \\
PLM-NW & 1.26 & 3.51 & 3.66\\
DML Lasso & 0.38 & 4.17 & 3.99 \\
DML DT & 240 & 8.41 & 15.51\\
\bottomrule
\end{tabular}
\captionof{table}{Comparison of PLMs in 50 independent runs with new data generation. Here PLMNW denotes the PLM using NW kernel, DT denotes decision trees at depth of 2.}
\label{table4:complex}
\end{table}

\vspace{-0.7cm}
\begin{figure}[!htb]
  \centering
  \includegraphics[width=0.4\linewidth]{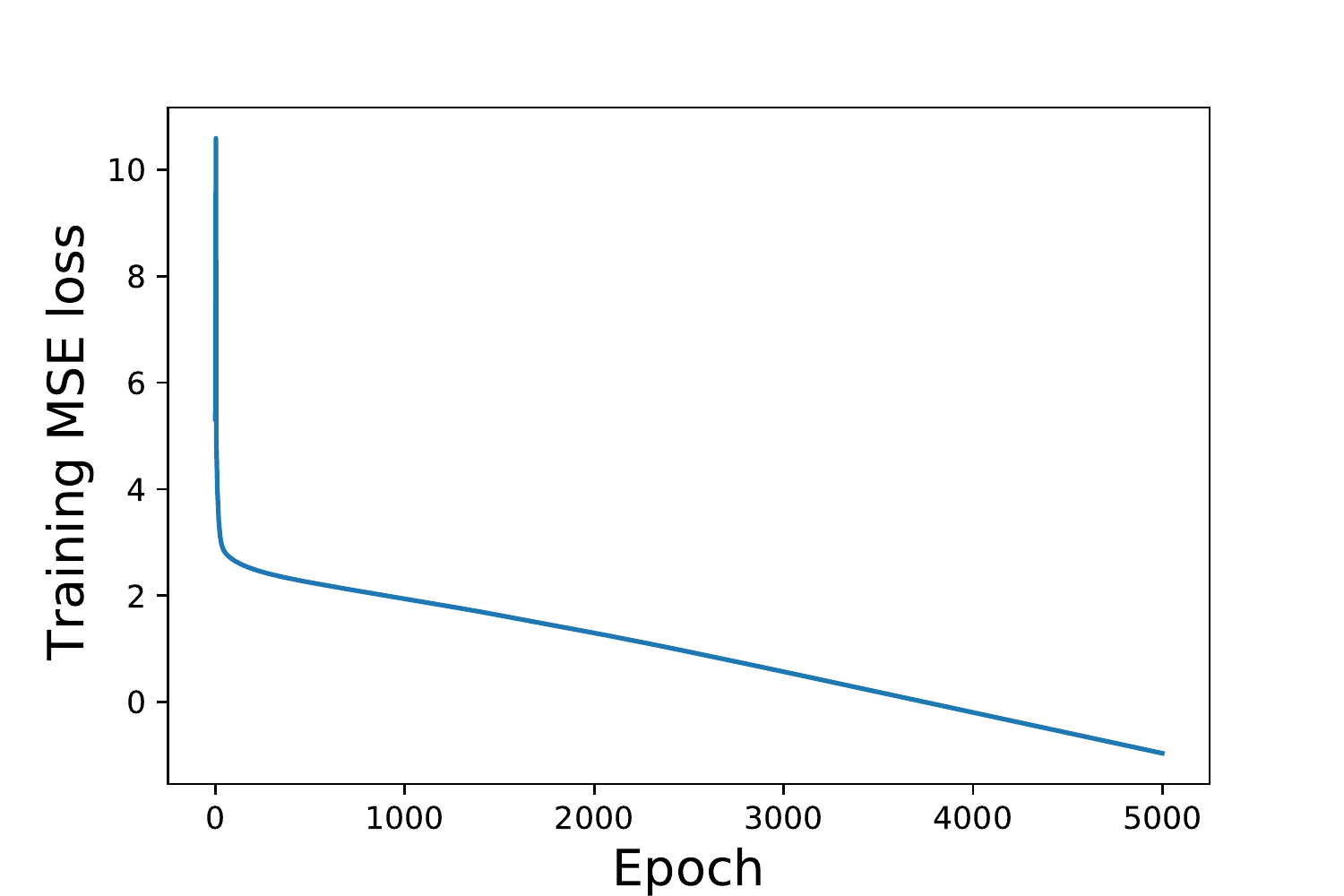}
  \captionof{figure}{Same setting as Table \ref{table4:complex} except $n=100$ and $\mathbf{Z}$ is normalized. The loss is in logarithmic scale.}
\end{figure}

\subsection{Adversarial attack on tabular data}
\label{AppendixD2}
Using the knowledge of the impact of a feature on the output, we can design adversarial samples to attack a potentially strong trained model.
The dataset we use is Default of Credit Card Clients Dataset from UCI. The dataset includes the information on default payments, demographic factors, credit data, history of payment, and bill statements of credit card clients in Taiwan from April 2005 to September 2005. The whole dataset includes 25 columns and a sample size of 30000.
We'll conduct null value imputation by association rules between categorical variables.

We preprocess the dataset in the following way: 
we use the known columns to train logistic regressions and to predict the missing values in the columns 'MARRIAGE' and 'EDUCATION'. Finally we conduct the standard scaling for $\mat{X}$ after one-hot encoding all the categorical columns.

We attack a ReLU activated MLP of three layers with the DebiNet and the traditional OLS. Our attacking mechanism takes the column with the maximum coefficient magnitude calculated by OLS or some feature selection methods. Once we find the targeted column, say the $j$-th column, we perturb the values towards the maximum or minimum value within the column, depending on the sign of $\hat\beta_j$.

\begin{table}[!htb]
\centering
\begin{tabular}{|cccc|}
\hline 
Methods & Val loss& Accuracy & AUC score \\
\hline 
\toprule
MLP Baseline&0.43&0.824&0.77\\
OLS Attack&0.39 &0.840&0.69\\
DebiNet Attack &3.95&0.008&0.02\\
\bottomrule
\end{tabular}
\vspace{0.2cm}
\caption{Adversarial attack on tabular dataset: the Taiwan credit.}
\label{table4}
\end{table}

The DebiNet is equipped with Lasso or Elastic net. The attacks show the same performance because the most influencing feature given by all three feature selection methods is identical. We note that the feature under perturbation is the first repayment status, indicating that not paying back the first credit bill has a large impact on the credibility of a client.
The perturbing columns selected by OLS and DebiNet are different. OLS fails to select what DebiNet selects because some features have infinity coefficients, which might caused by model overfitting and consequently large variances of the parameter estimators. The validation loss, accuracy and AUC score in Table \ref{table4} show that OLS attack is not effective as DebiNet.
\section{General Losses, Activation Functions and Other Optimizers}
\label{AppendixE}
We demonstrate the convergence (in log-scale) of two-layer, fully connected, multivariate output neural networks with the same input distribution and weight initialization as in Figure \ref{fig:NTK loss}. The baseline is the ReLU activation, full-batch gradient descent and MSE loss and we change only one element in the baseline at a time. 

Similar to the univariate output case, we observe that with sufficiently wide hidden layer and sufficiently small learning rate, the neural networks may converge to the global minimum at a linear rate with different optimizers, losses and activation functions. In particular, we note that the performance in this section is not yet supported by theory and thus suggests that NTK theory may be richer than it currently is. We highlight that advanced tools from linear algebra are required to analyze the positive definiteness of the NTK matrices incurred by other activation functions. Moreover, different optimizers lead to different gradient flows and, together with different losses, to different matrix ordinary differential equations. It would be interesting to investigate the effects of different dynamics on the convergence under the NTK framework.

\subsection{Optimizers besides Gradient Descent}
\begin{figure}[!htbp]
    \centering
    \hspace{-0.5cm}
    \includegraphics[width=0.35\textwidth]{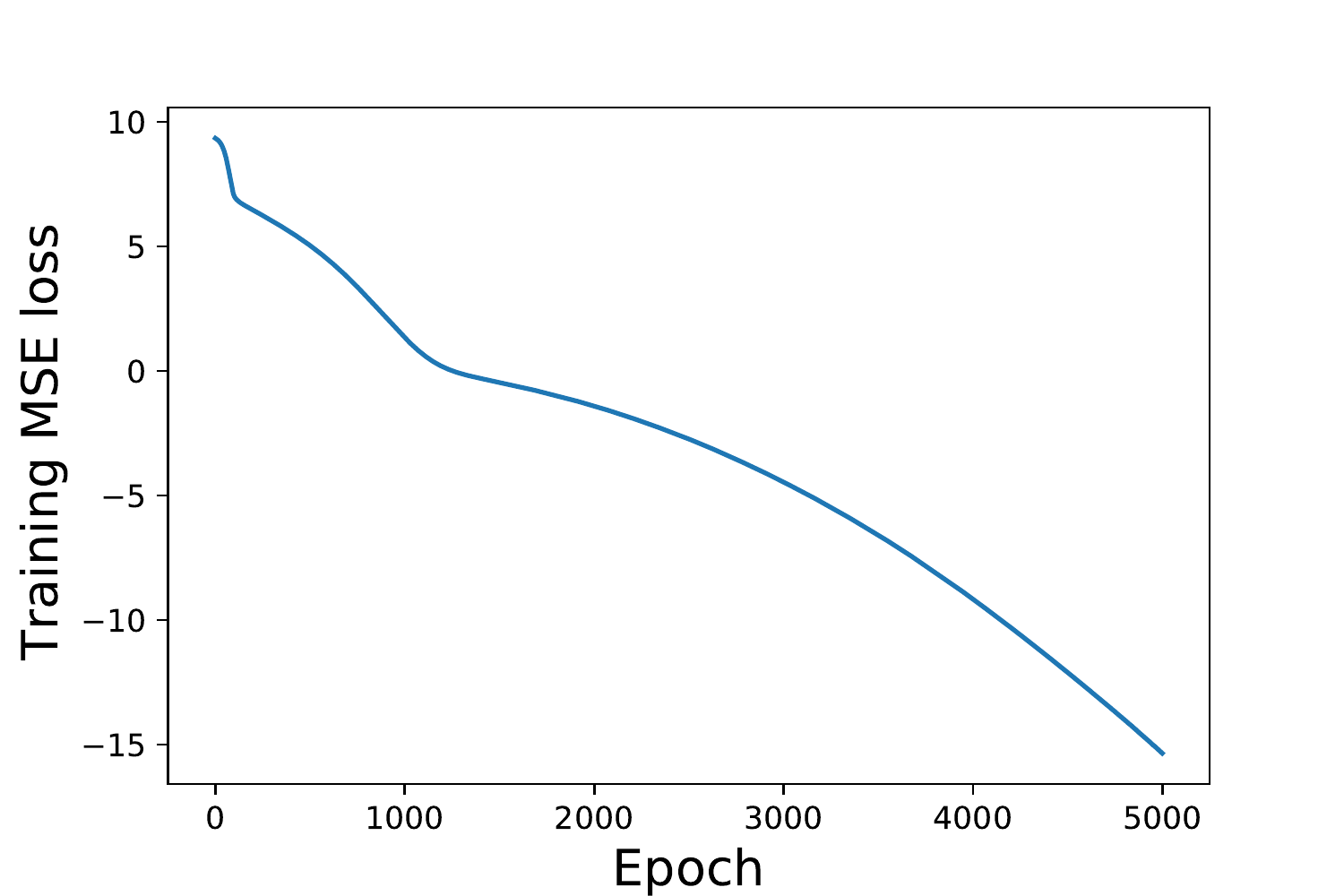}
    \hspace{-0.5cm}
    \includegraphics[width=0.35\textwidth]{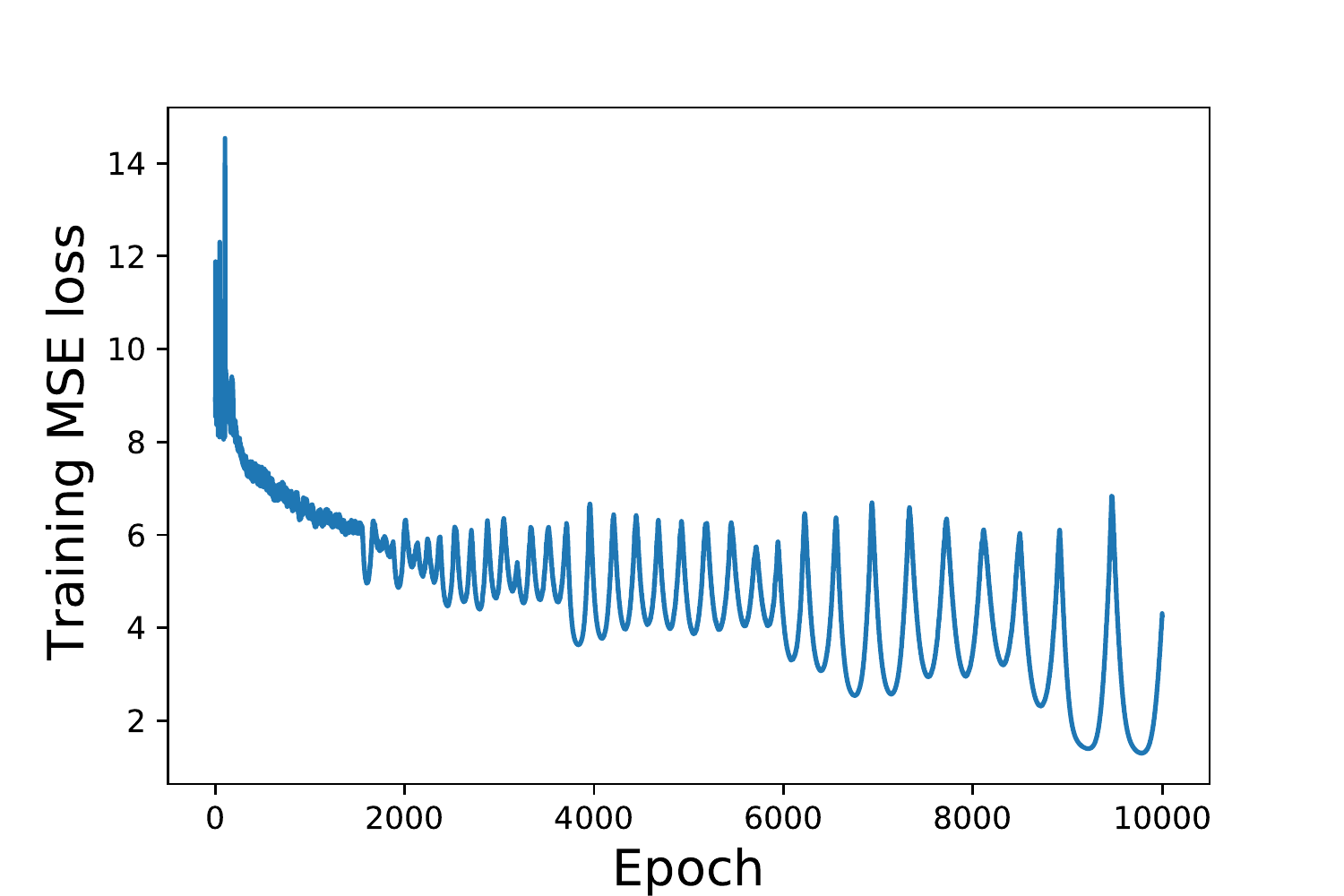}
    \hspace{-0.5cm}
    \includegraphics[width=0.35\textwidth]{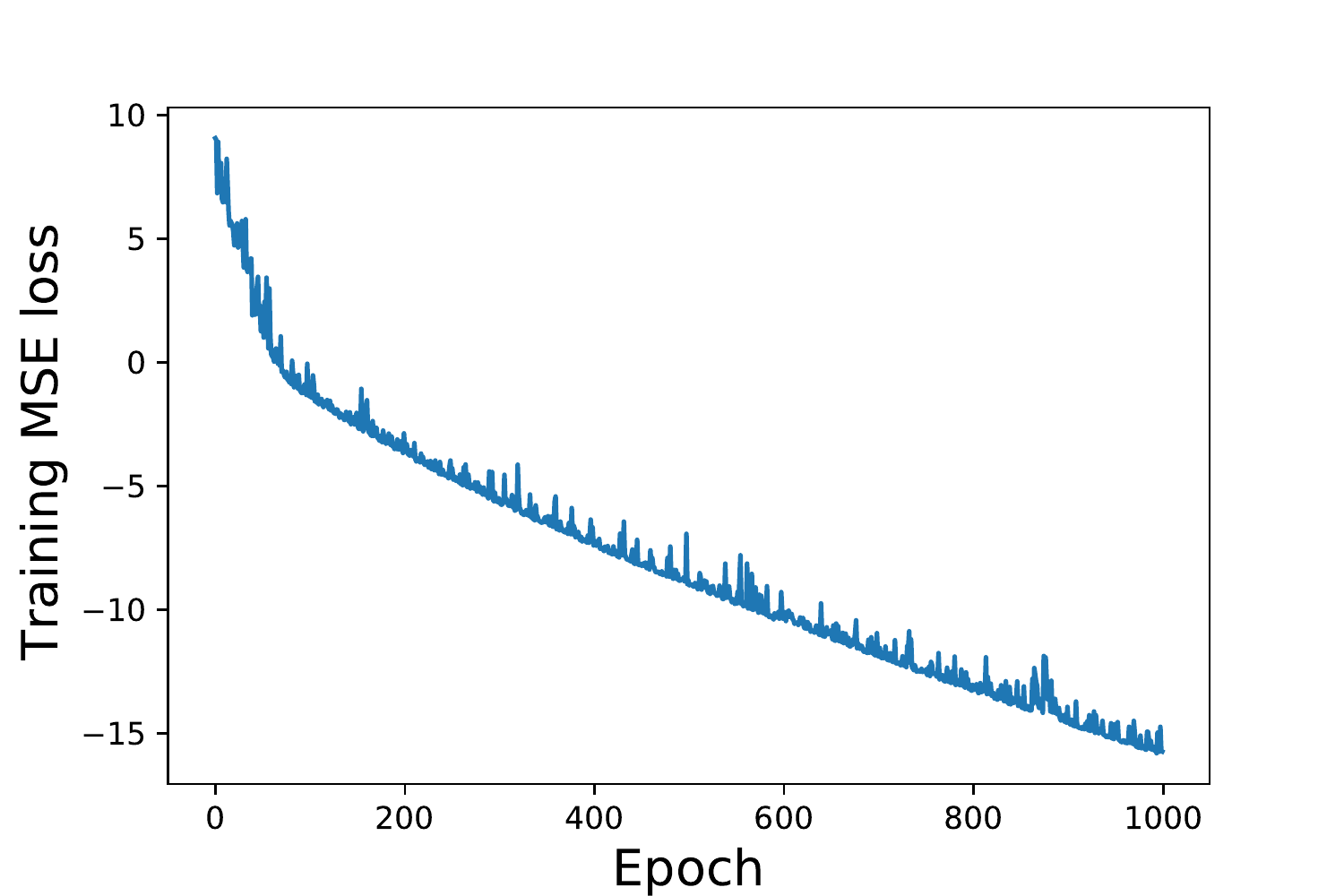}
    \hspace{-0.5cm}
    \caption{Left: Adam with learning rate 0.001 and full batch size. Middle: Nesterov-accelarated gradient descent with learning rate 0.01. Right: SGD with learning rate 0.01 and batch size 8.}
    \label{fig:optimizers}
\end{figure}
In Figure \ref{fig:optimizers}, Adam converges to zero loss exponentially fast but the Nesterov-accelarated gradient descent seems not to. This shows that momentum has a significant impact on the convergence. It is also interesting to note that SGD converges much faster than the gradient descent, at a linear rate. The plot shows some fluctuations due to the randomness of sub-sampling and coincides with the dynamics described in the univariate output case.

\newpage
\subsection{General Losses and Activation Functions}
\begin{figure}[!htb]
    \centering
    \hspace{-0.5cm}
    \includegraphics[width=0.35\textwidth]{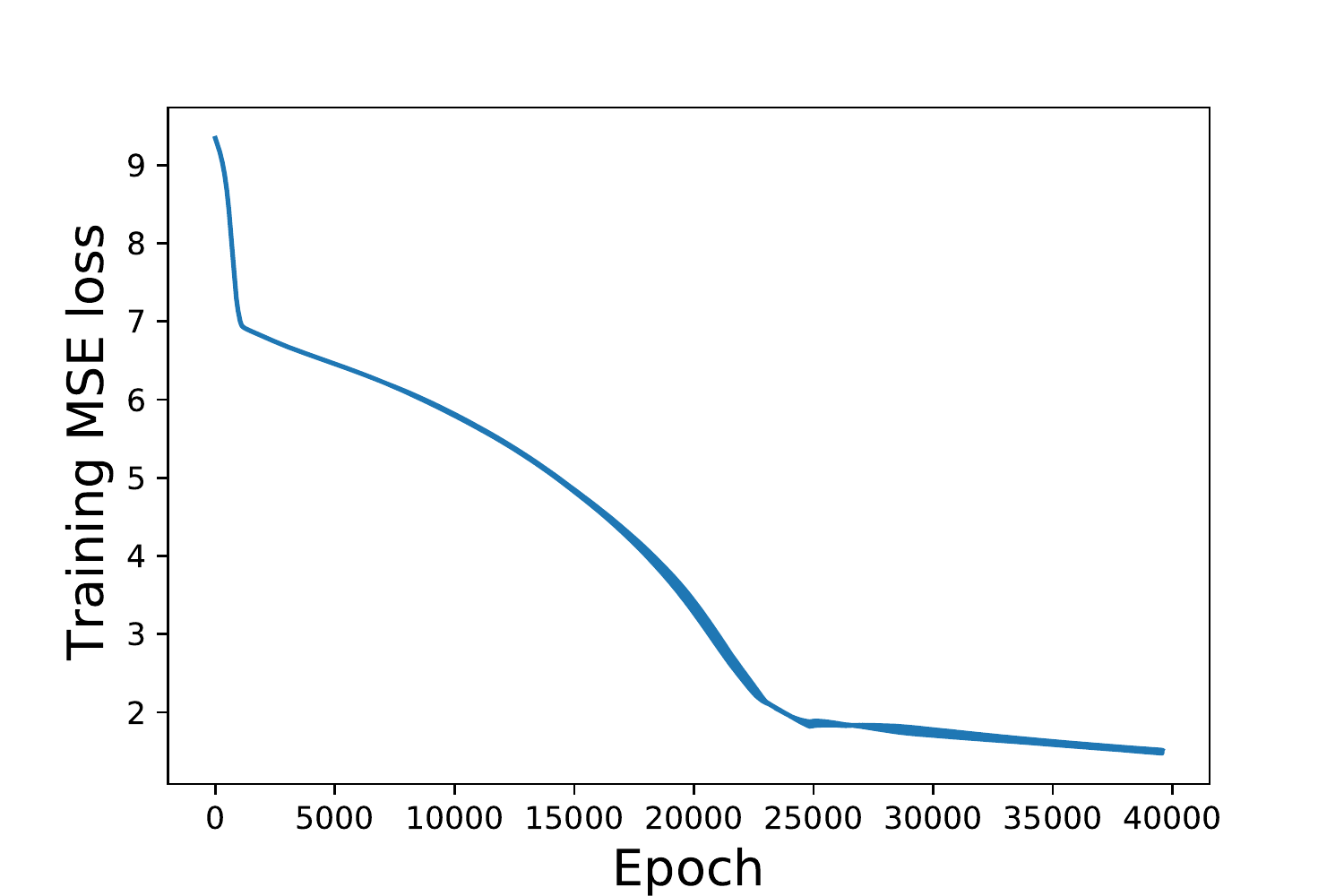}
    \hspace{-0.5cm}
    \includegraphics[width=0.35\textwidth]{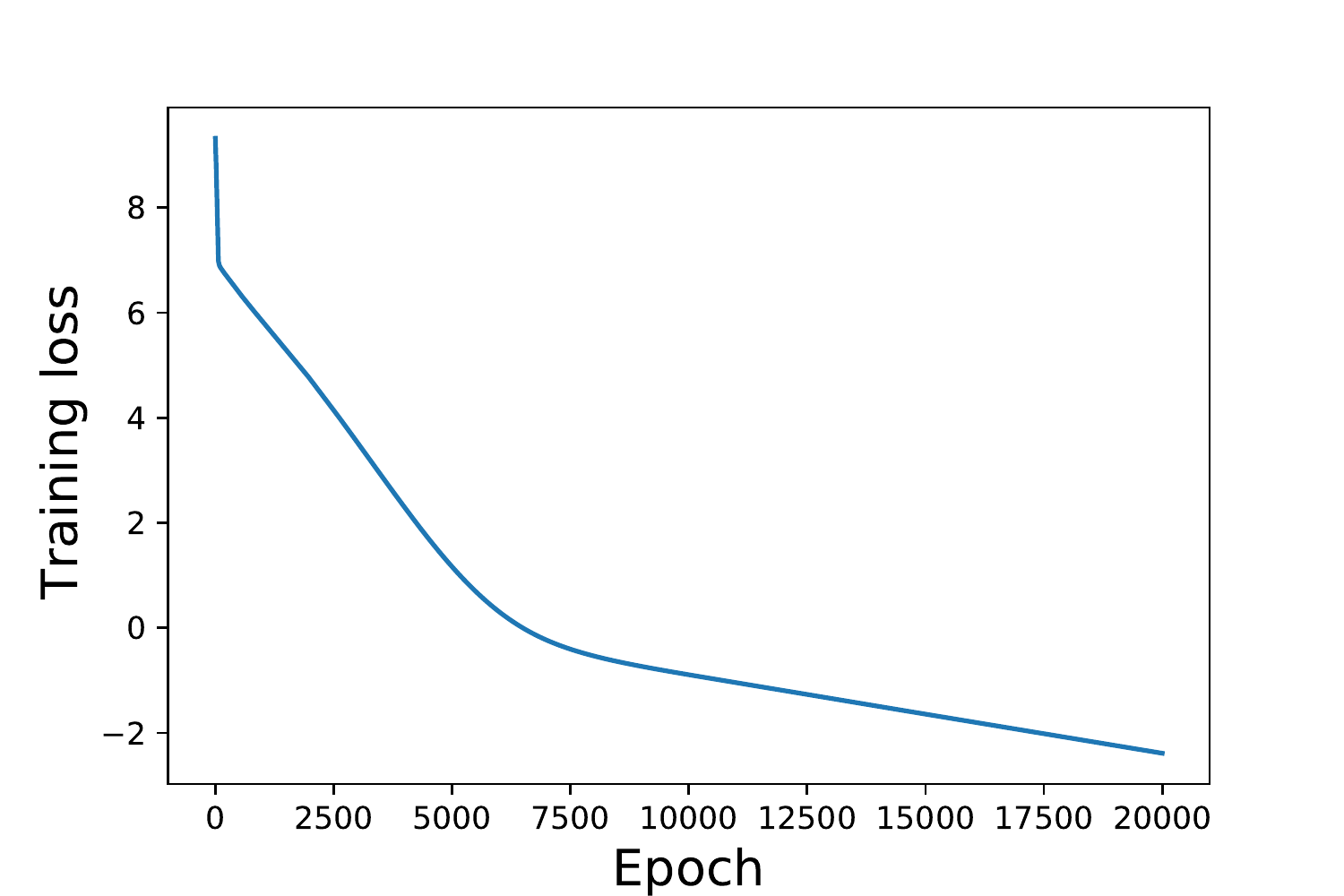}
    \hspace{-0.5cm}
    \includegraphics[width=0.35\textwidth]{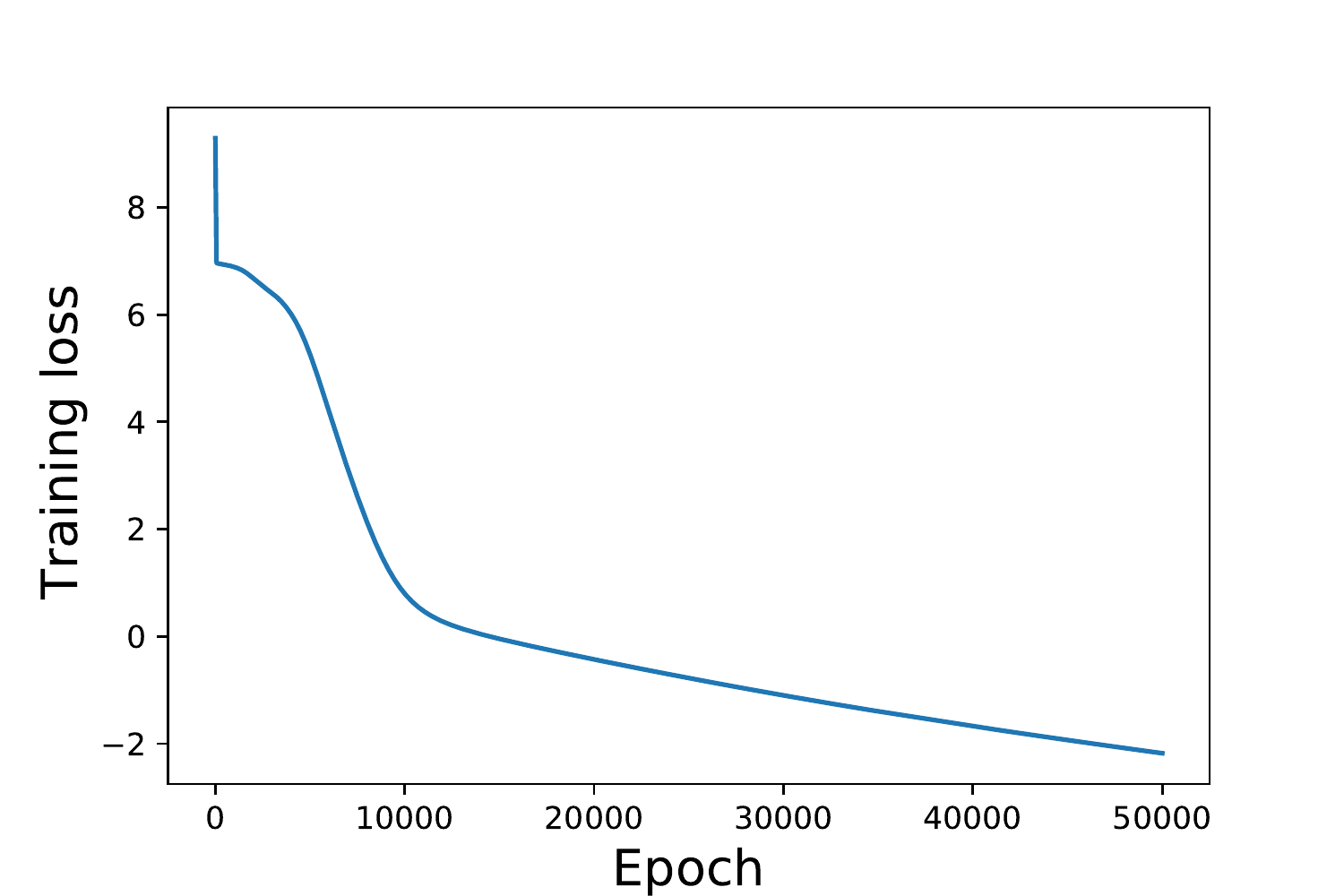}
    \hspace{-0.5cm}
    \caption{Left: Huber loss. Middle: Leaky ReLU activation. Right: Tanh activation.}
    \label{fig:loss and activation}
\end{figure}
In Figure \ref{fig:loss and activation}, we empirically illustrate that loss functions play  a key role in the training dynamics as they directly affect the dynamics. While the activation functions may have weaker effects on whether the dynamics converge to the global minimum, they can influence the rates at which the convergence takes place.
\end{document}